\documentclass[10pt,twocolumn,letterpaper]{article}

\usepackage{times}
\usepackage{enumitem}
\usepackage{algorithm}
\usepackage{algpseudocode}

\makeatletter
\font\cvprelvbf=ptmb scaled 1100
\font\cvprtenbf=ptmb scaled 1000
\renewcommand\section{\@startsection{section}{1}{\z@}%
  {-10pt plus -2pt minus -2pt}{7pt}{\large\bfseries}}
\renewcommand\subsection{\@startsection{subsection}{2}{\z@}%
  {-8pt plus -2pt minus -2pt}{5pt}{\cvprelvbf}}
\renewcommand\subsubsection{\@startsection{subsubsection}{3}{\z@}%
  {-6pt plus -2pt minus -2pt}{3pt}{\cvprtenbf}}
\makeatother
\usepackage{graphicx}
\usepackage{booktabs}
\usepackage{amsmath,amssymb,amsthm}
\usepackage{subcaption}
\usepackage{arydshln}
\usepackage{cancel}
\usepackage{xcolor}
\usepackage[numbers,sort&compress]{natbib}
\usepackage{titlesec}
\titlespacing*{\paragraph}{0pt}{8pt}{0.5em}

\newtheorem{proposition}{Proposition}
\newtheorem{definition}{Definition}

\newtheorem{observation}{Observation}

\newtheorem{corollary}{Corollary}

\definecolor{linkblue}{rgb}{0.21,0.49,0.74}
\definecolor{airforceblue}{rgb}{0.36,0.54,0.66}

\usepackage[pagebackref,breaklinks,colorlinks,allcolors=linkblue]{hyperref}

\title{Coordination on a Budget: Federated Active Learning with Few Labels}
\author{
Liam Mohr and Daphna Weinshall\\
School of Computer Science and Engineering \\
The Hebrew University of Jerusalem, 
Jerusalem 91904, Israel\\
\texttt{\{liam.mohr,daphna\}@mail.huji.ac.il}
}
\date{}

\begin{document}

\maketitle

\begin{center}
    \textbf{Abstract}
\end{center}
\vspace{-0.5em}

{\small\itshape


Federated Active Learning (FAL) addresses the dual challenges of data privacy and label scarcity, where the absence of a global data view introduces additional hurdles for coordinated query selection. We study cross-silo FAL in the low-budget regime, where annotation decisions are most critical. We characterize, both theoretically and empirically, a \emph{heterogeneity reversal}: in low-budget settings, homogeneous (IID) data requires stronger coordination to avoid redundant queries, whereas heterogeneous data naturally promotes diversity; this trend reverses at higher budgets. Thus, in contrast to the standard federated learning (FL) narrative where heterogeneity is a primary challenge, we show that IID settings are more challenging for query selection in FAL.

Motivated by these findings, we propose a new FAL framework that utilizes federated representation learning to align client data in a shared embedding space. This enables the server to perform globally coordinated active selection over optionally obfuscated client embeddings, while annotation remains local to each client. Although our framework operates in the more challenging low-budget regime, it achieves performance that surpasses existing FAL methods even when they are given substantially larger annotation budgets, demonstrating the value of centralized coordination under privacy constraints.

\par}

\section{Introduction}

Modern machine learning systems are increasingly deployed in settings where data is decentralized and subject to strict privacy and governance constraints. In domains such as healthcare, finance, and telecommunications, data is distributed across institutions and cannot be shared due to regulatory or operational limitations. Federated Learning (FL) addresses this challenge by enabling collaborative model training without access to raw data. However, FL typically assumes labeled data, while in practice, unlabeled data is abundant and annotations are costly and scarce.

Active Learning (AL) offers a complementary solution by selectively querying the most informative samples for annotation. Yet, classical AL relies on centralized access to the unlabeled pool, allowing the model to compare candidates globally. In federated settings, this assumption breaks down: data remains distributed across clients, and coordination is restricted by privacy and communication constraints. This gives rise to the setting of \emph{Federated Active Learning} (FAL), where query selection must be performed without direct access to a global data view.

A central challenge in FAL is coordinating selection across clients to avoid redundant or suboptimal queries. While data heterogeneity is traditionally viewed as a primary obstacle in federated learning, we uncover a contrasting phenomenon in the low-budget regime. Specifically, we observe a \emph{heterogeneity reversal}: when clients have similar (IID) data distributions, independent selection leads to redundant queries and poor global coverage, making coordination essential. In contrast, heterogeneous (non-IID) data naturally promotes diversity in selected samples, reducing the need for coordination. This trend reverses at higher budgets, where non-IID data introduces bias in uncertainty estimation, reverting to the classical challenges of FL.
We formalize this interaction in Section~\ref{sec:hetero}, showing that the value of coordination depends jointly on data heterogeneity and labeling budget. Table~\ref{tab:coordination_utility} summarizes the resulting reversal between diversity- and uncertainty-driven regimes.

\begin{figure*}[t]
    \centering
\includegraphics[width=\linewidth]{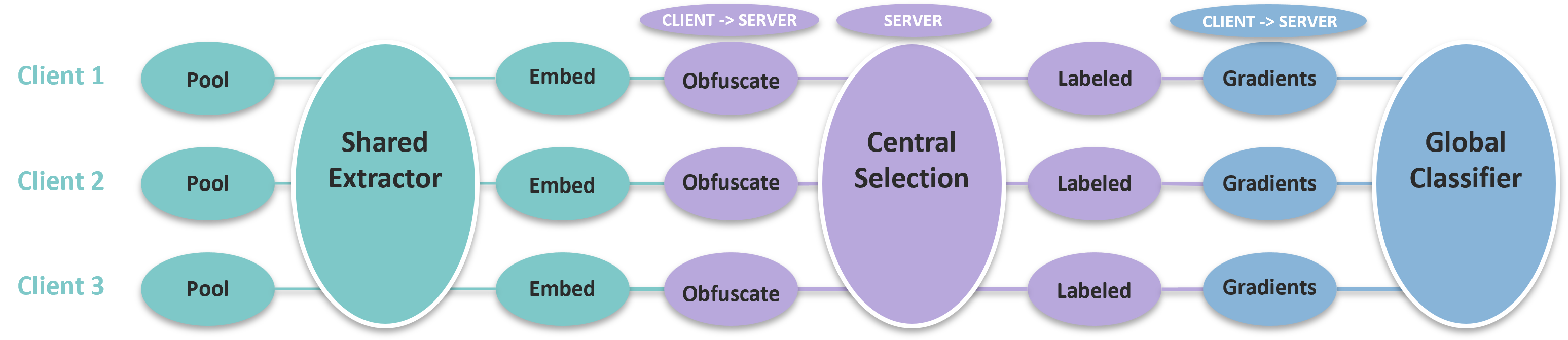}
    \caption{Overview of the proposed federated active learning pipeline. A federated feature extractor first induces a shared embedding space across clients. Each client then computes local embeddings and may apply an obfuscation mechanism before transmitting them to the server. The server performs centralized active selection in the aggregated embedding space, while the selected samples are labeled locally by the corresponding clients and used for subsequent federated training.}
    \label{fig:method_visualization}
\end{figure*}

\begin{table*}[t]
\small
\centering
\begin{tabular}{@{}llll@{}}
\toprule
\textbf{Criterion} & \textbf{IID Setting} & \textbf{Non-IID Setting} & \textbf{novelty}\\ \midrule
\textbf{Diversity} & \textbf{High Coordination Required} & \textbf{Low Coordination Required} & new challenge\\
& (Avoids overlap/redundancy) & (Disjoint supports ensure variety) &\\ \addlinespace
\textbf{Uncertainty} & \textbf{Low Coordination Required} & \textbf{High Coordination Required} & old challenge\\
& (Local $\approx$ Global informative) & (Corrects for local representation bias) &\\ \bottomrule
\end{tabular}
\caption{
\textbf{Coordination vs. heterogeneity.} In the low-budget regime, IID data require coordination to ensure diversity, making them more challenging than non-IID data - a reversal of the standard FL setting.}
\label{tab:coordination_utility}
\end{table*}

Our analysis reveals a key limitation of existing FAL approaches: they do not explicitly account for the interaction between data distribution and labeling budget. A natural baseline is to apply active learning independently at each client and rely on federated training only after annotation, as in prior low-budget FAL pipelines~\citep{ono2025exploring}. However, this confines inter-client coordination to the model-training stage, leaving query selection decoupled across clients and failing to fully exploit the potential synergy between active learning and federated learning. 

We address this limitation through globally coordinated query selection in a shared federated representation space. A federatively learned feature extractor aligns client embeddings, allowing the server to compare candidates across clients and coordinate selection while raw data remain local. Selected samples are then labeled locally and used for downstream federated training (see Figure~\ref{fig:method_visualization}).

While this design enables effective coordination across clients, it also introduces a potential privacy risk, as shared embeddings may leak information about the underlying data; in visual domains, feature representations can be vulnerable to inversion or reconstruction attacks \citep{chatzikokolakis2013broadening
}. To address this challenge, we investigate two privacy-preserving approaches, using either controlled perturbation of embeddings or centroid-based aggregation, and study the resulting trade-offs between privacy protection and active selection performance.

\paragraph{Our contributions are:}
\begin{itemize}[itemsep=1pt, topsep=3pt, parsep=0pt, partopsep=0pt]
    \item We identify and analyze a \emph{heterogeneity reversal} in federated active learning: in low-budget regimes, IID data leads to redundant sampling and requires stronger coordination than heterogeneous data.
    \item We propose a new framework for \emph{globally coordinated query selection} using shared federated embedding,  with two embedding-obfuscation mechanisms for data protection. 
    \item We show that global coordination substantially improves label efficiency in the low-budget regime, while retaining much of this advantage under embedding obfuscation, yielding a favorable privacy-performance trade-off.
\end{itemize}

\section{Related Work}
\label{sec:related}

\paragraph{Low-Budget Active Learning.}
Early in the AL process, uncertainty-based methods, such as Entropy Sampling \citep{wang2014new}, Least Confidence \citep{lewis1994sequential}, and Margin Sampling \citep{scheffer2001active}, frequently underperform because the model's predictive signal is of low quality \citep{hacohen2022active}. To mitigate this, recent approaches leverage self-supervised representations. For instance, \emph{TypiClust} \citep{hacohen2022active} prioritizes representative samples from high-density clusters. Similarly, \emph{ProbCover} \citep{yehuda2022active} and \emph{MaxHerding} \citep{bae2024maxherding} frame active selection as a probabilistic coverage problem, strategically picking samples to maximize the probability that the unlabeled data manifold is spanned within the given budget constraint.

\paragraph{Federated Learning.}
Federated learning (FL) enables collaborative model training across distributed clients while keeping data localized, primarily addressing privacy, communication efficiency, and statistical heterogeneity 
\citep{mcmahan2017communication, kairouz2021advances}. A large body of work focuses on mitigating the challenges arising from non-IID data distributions and limited communication bandwidth \citep{li2020federated, karimireddy2020scaffold}. 

In contrast, our work considers a small number of clients with relatively homogeneous data distributions, allowing us to isolate a different bottleneck: labeled-data scarcity and its interaction with distributed active selection. This perspective complements existing FL research by highlighting challenges that arise even when communication constraints and data heterogeneity are less pronounced.

\paragraph{Federated contrastive representation learning.}
Methods that are effective for federated learning under supervised objectives, such as cross-entropy, are significantly less effective for contrastive learning, since the global self-supervised objective does not decompose into a sum of local objectives \citep{zhuang2021divergence}. This mismatch can lead to degraded representations when applying standard federated averaging. Prior work has proposed adaptations such as prototype-based alignment or modified contrastive objectives to mitigate this issue \citep{ye2021fedproto, li2021federatedcontrastive}. Much work has been dedicated to address the adversarial effect of non-IID client distribution on FCRL \citep{dong2021federated,zhuang2021divergence,jing2024fedsc,han2022fedx,seo2024relaxed,louizosmutual}. 

\paragraph{Federated Active Learning.}
Most federated active learning (FAL) methods rely on model-based scoring to estimate sample informativeness, making them effective primarily in high-budget regimes while degrading at low budgets. Early work such as \citep{ahmed2020active} adopts a \emph{separate} AL (S-AL) paradigm, where selection is performed locally at each client. In contrast, F-AL \citep{ahn2024federated} enables collaborative evaluation but shows clear gains only at higher budgets (e.g., 150--200 labels per class on CIFAR-100). Others address this mismatch by selecting samples informative for both local and global objectives \citep{kim2022lg, cao2023knowledge, kim2023re}, typically focusing on non-IID settings where heterogeneity complicates uncertainty estimation \citep{zhang2023affectfal}. Finally, in Active Federated Learning (AFL), the decision concerns \emph{which clients} to train rather than which samples to label \citep{goetz2019active}.

Among existing FAL methods, \citep{ono2025exploring} specifically targets and remains effective in the low-budget regime. This approach follows a separate active learning (S-AL) paradigm, where sample evaluation is performed locally at each client. By leveraging a selection criterion tailored to low-budget settings \citep{hacohen2022active}, it performs well in federated scenarios and outperforms existing FAL baselines \citep{ono2025exploring}. We therefore adopt it as a primary baseline in our experiments.

\section{Heterogeneity \& Inter-Client Coordination}
\label{sec:hetero}

Standard active learning balances two criteria: \textbf{diversity}, which promotes coverage of the feature space, and \textbf{uncertainty}, which targets low-confidence regions. In federated settings, the value of inter-client coordination depends critically on data heterogeneity. We analyze this interaction in the context of active learning query selection criteria, and highlight two key effects:

\begin{enumerate}
\item \textbf{Diversity-centric selection:} Under IID data, independent clients tend to select overlapping samples, requiring strong coordination to avoid redundancy. In contrast, heterogeneous data naturally partitions the space, reducing the need for coordination.

\item \textbf{Uncertainty-centric selection:} Under IID data, independent clients tend to learn similar models. In contrast, under non-IID data, local models become biased estimators of global uncertainty, making coordination necessary. This aligns with classical FL results~\cite{mcmahan2017communication}, where heterogeneity induces model divergence.
\end{enumerate}

Together, these effects reveal a \emph{reversed vulnerability} in Federated Active Learning: coordination is most critical for diversity under \emph{IID data}, and for uncertainty under \emph{non-IID data}. We formalize this relationship in Section~\ref{sec:heter-div}, focusing on the diversity regime where this reversal departs from standard FL, and validate it empirically in Sections~\ref{sec:full-pipeline} and~\ref{sec:same_embedding_ablation}, with detailed results provided in Appendix~\ref{sec:ablation}.

\subsection{Notation and Preliminaries}
\label{sec:notation}

Let $\mathcal{X}\subseteq\mathbb{R}^d$ denote the feature space and $\mathcal{Y}$ the label space. Consider $K$ clients, where client $k\in\{1,\ldots,K\}$ possesses a local data distribution $\mathcal{P}_k$ over $\mathcal{X}\times\mathcal{Y}$. Let
\[
\mathcal{X}_k
=
\operatorname{supp}(\mathcal{P}_k^X)
\]
denote the support of the feature marginal of client $k$.

\begin{definition}[Coordination Gap]
Let $\Phi(S)$ denote a selection utility for a set $S\subseteq\mathcal{X}$. 
Let $b_k$ denote the annotation budget of client $k$, with
\[
\sum_{k=1}^K b_k=B.
\]
Define the optimal coordinated utility as
\[
\Phi_B^\star
=
\max_{\substack{
S_k\subseteq\mathcal{X}_k,\ |S_k|=b_k\\
k=1,\ldots,K
}}
\Phi\left(
\bigcup_{k=1}^K S_k
\right).
\]
For each client, let
\[
S_k^\star
\in
\arg\max_{\substack{
S\subseteq\mathcal{X}_k\\
|S|=b_k
}}
\Phi(S)
\]
denote its locally optimal selection.
The coordination gap is
\begin{equation}
\label{eq:gap}
\Delta_K
=
\Phi_B^\star
-
\Phi\left(
\bigcup_{k=1}^K S_k^\star
\right).
\end{equation}
\end{definition}

\subsection{Data Heterogeneity and Diversity}
\label{sec:heter-div}

We show that the utility of coordination increases with overlap between clients' selections and vanishes when heterogeneity separates their accessible regions. We formalize this relationship using a cell-coverage model.

Let the shared embedding space be partitioned into $M$ cells,
$
\mathcal{C}
=
\{C_1,\ldots,C_M\},
$
representing regions such as clusters or high-density neighborhoods. Define the coverage utility
\begin{equation}
\label{eq:cell-coverage}
\Phi(S)
=
\sum_{m=1}^M
\mathbf{1}\{S\cap C_m\neq\emptyset\}.
\end{equation}
For each client, define the accessible cell set
\begin{equation}
\label{eq:accessible-cells}
A_k
=
\{m:C_m\cap\mathcal{X}_k\neq\emptyset\},
\end{equation}
and the cell-selection probability
\begin{equation}
\label{eq:akm}
a_{k,m}
=
\Pr(S_k^\star\cap C_m\neq\emptyset).
\end{equation}
Thus, $a_{k,m}=0$ whenever $m\notin A_k$.
 
We start from the expected coordination gap $\bar{\Delta}_K$:
\begin{eqnarray}
\bar{\Delta}_K
&=&
\mathbb{E}[\Delta_K]
=
\mathbb{E}[\Phi_B^\star]
-
\mathbb{E}[\Phi({\bigcup_{k=1}^K S_k^\star})]\nonumber \\
&=& \label{eq:efficiency}
\mathbb{E}[\Phi_B^\star]
- \sum_{k=1}^K
\mathbb{E}[\Phi(S_k^\star)] \\ \label{eq:objective} && + \sum_{k=1}^K
\mathbb{E}[\Phi(S_k^\star)] -
\mathbb{E}[\Phi({\bigcup_{k=1}^K S_k^\star})]
\end{eqnarray}

The expression in (\ref{eq:efficiency}) captures the difference between the objectives optimized by the two methods. In the very low-budget regime, we assume that each selection - whether local or global - is efficient, selecting at most one point per cell. Thus, uncoordinated but efficient selection yields
$\mathbb{E}[\Phi_B^\star]
\approx
\sum_{k=1}^K \mathbb{E}[\Phi(S_k^\star)]$
(see Appendix~\ref{app:coverage-gap-proofs}).

The key difference therefore lies in (\ref{eq:objective}), which measures how much of the utility independently gained by individual clients is preserved in the combined selected set rather than lost to redundant selections. We thus focus on the \emph{expected redundancy gap} $\bar{\Delta}_K^{\mathrm{red}}$, defined as
\begin{equation}
\label{eq:effective-gap}
\bar{\Delta}_K^{\mathrm{red}}
=
\sum_{k=1}^K
\mathbb{E}[\Phi(S_k^\star)]
-
\mathbb{E}\!\left[\Phi\!\left(\bigcup_{k=1}^K S_k^\star\right)\right].
\end{equation}
Under the aforementioned efficient-selection assumption,
\begin{equation}
\label{eq:coord-red-approx}
\bar{\Delta}_K
\approx
\bar{\Delta}_K^{\mathrm{red}}.
\end{equation}
Thus, in what follows, we focus on the redundancy gap.

\begin{proposition}[Heterogeneity and the Redundancy Gap]
\label{prop:heterogeneity-gap}
Assume that, for each cell $C_m$, the events
$\{S_k^\star\cap C_m\neq\emptyset\}$ are independent across clients.
Then the following hold:
\begin{enumerate}
    \item If the accessible cell sets are pairwise disjoint,
    \[
    A_i\cap A_j=\emptyset
    \qquad
    \forall i\neq j,
    \]
    then
    \[
    \bar{\Delta}_K^{\mathrm{red}}=0.
    \]

    \item Define the contribution of cell $C_m$ to the expected redundancy gap as
    \[
    \Delta_m
    =
    \sum_{k=1}^K a_{k,m}
    -
    \left(
    1-\prod_{k=1}^K(1-a_{k,m})
    \right).
    \]
    Then
    \[
    \frac{\partial\Delta_m}{\partial a_{j,m}}
    =
    1-\prod_{k\neq j}(1-a_{k,m})
    \geq 0,
    \]
    with strict inequality whenever another client selects from $C_m$ with positive probability.

    \item Let
    \[
    c_m^{(K)}
    =
    1-\prod_{k=1}^K(1-a_{k,m})
    \]
    be the probability that at least one of the first $K$ clients selects from $C_m$. Then
    \[
    \bar{\Delta}_{K+1}^{\mathrm{red}}-\bar{\Delta}_K^{\mathrm{red}}
    =
    \sum_{m=1}^M
    a_{K+1,m}c_m^{(K)}.
    \]
\end{enumerate}
\end{proposition}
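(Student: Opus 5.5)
The plan is to rewrite the expected redundancy gap $\bar{\Delta}_K^{\mathrm{red}}$ in (\ref{eq:effective-gap}) cell by cell. The coverage utility (\ref{eq:cell-coverage}) is a sum of indicators, so linearity of expectation gives
\[
\mathbb{E}[\Phi(S_k^\star)]=\sum_{m=1}^M \Pr(S_k^\star\cap C_m\neq\emptyset)=\sum_{m=1}^M a_{k,m},
\]
using the definition (\ref{eq:akm}). For the union, $\bigcup_k S_k^\star$ meets $C_m$ if and only if some $S_k^\star$ meets $C_m$. By the independence assumption across clients, this event has probability $1-\prod_{k=1}^K(1-a_{k,m})$. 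Combining the two,
\[
\bar{\Delta}_K^{\mathrm{red}}=\sum_{m=1}^M \Delta_m ,
\]
with $\Delta_m$ as defined in item 2. All three items then follow from this cell-wise representation.

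\emph{Item 1.} If the $A_k$ are pairwise disjoint, each cell $m$ lies in at most one $A_k$. Since $a_{k,m}=0$ whenever $m\notin A_k$, at most one $a_{k,m}$ is nonzero for each $m$; call it $a$, or set $a=0$ if none is. Then $\Delta_m=a-(1-(1-a))=0$ for every $m$, and summing gives $\bar{\Delta}_K^{\mathrm{red}}=0$.

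\emph{Item 2.} Treat the $a_{k,m}$ as free variables in $[0,1]$ and differentiate. Write $\prod_k(1-a_{k,m})=(1-a_{j,m})P_{-j}$, where $P_{-j}=\prod_{k\neq j}(1-a_{k,m})$. Then $\partial\Delta_m/\partial a_{j,m}=1-P_{-j}$. Since every factor of $P_{-j}$ lies in $[0,1]$, we have $P_{-j}\le 1$, so the derivative is nonnegative. The inequality is strict exactly when $P_{-j}<1$, that is, when some $a_{k,m}>0$ for $k\neq j$.

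\emph{Item 3.} Compare the per-cell terms for $K+1$ and $K$ clients:
\[
\Delta_m^{(K+1)}-\Delta_m^{(K)}=a_{K+1,m}-\Bigl[\prod_{k\le K}(1-a_{k,m})-\prod_{k\le K+1}(1-a_{k,m})\Bigr].
\]
The bracket equals $a_{K+1,m}\prod_{k\le K}(1-a_{k,m})=a_{K+1,m}\bigl(1-c_m^{(K)}\bigr)$, so the difference is $a_{K+1,m}c_m^{(K)}$. Summing over $m$ gives the claim.

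I do not expect a real obstacle. The only subtle step is the union probability, which relies on the stated per-cell independence across clients; this is why the assumption appears in the proposition. One should also note that $a_{k,m}$ are defined for the $K$-client configuration and are not changed by adding client $K+1$. This holds because each $S_k^\star$ is computed locally and independently of the other clients.
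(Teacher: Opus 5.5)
Your proposal is correct and follows essentially the same route as the paper's appendix proof. Both first derive the cell-wise representation $\bar{\Delta}_K^{\mathrm{red}}=\sum_m \Delta_m$ from linearity of expectation and per-cell independence, and then obtain the three items by the same arguments: each cell contributes zero, the derivative equals $1-\prod_{k\neq j}(1-a_{k,m})$, and the new client's factor $(1-a_{K+1,m})$ is peeled off the product. Your remark that the $a_{k,m}$ of the first $K$ clients do not change when client $K+1$ is added makes explicit an assumption the paper uses only implicitly.
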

\noindent
Proof is provide in Appendix~\ref{app:propo1-proof}.

The last result shows that the marginal redundancy introduced by an additional client is determined by its overlap with the existing selection profile. Heterogeneity that shifts selection mass toward cells with small $c_m^{(K)}$ therefore causes a smaller increase in the \emph{redundancy gap}.

\begin{corollary}
\label{cor:iid-gap} 
If the clients' distributions are IID, the \emph{redundancy gap} increases (or remains unchanged) with the number of clients.
\end{corollary}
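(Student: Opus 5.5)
The plan is to read the corollary off part~3 of Proposition~\ref{prop:heterogeneity-gap}, which expresses the increment of the redundancy gap when a $(K+1)$-st client is added as
\[
\bar{\Delta}_{K+1}^{\mathrm{red}}-\bar{\Delta}_K^{\mathrm{red}}=\sum_{m=1}^M a_{K+1,m}\,c_m^{(K)} .
\]
Both factors in each summand are probabilities: $a_{K+1,m}\in[0,1]$ by (\ref{eq:akm}), and $c_m^{(K)}=1-\prod_{k\le K}(1-a_{k,m})\in[0,1]$ because each $1-a_{k,m}\in[0,1]$. Hence every term is nonnegative and the increment is $\ge 0$. Iterating from $K$ to $K+1$ then gives that $\bar{\Delta}_K^{\mathrm{red}}$ is nondecreasing in $K$.

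This monotonicity holds whenever the independence assumption of the proposition holds, so the IID hypothesis is used only to sharpen the conclusion. I would first verify that part~3 applies: under IID clients with independent local draws, the events $\{S_k^\star\cap C_m\neq\emptyset\}$ are independent across $k$, as the proposition requires.

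Next, under IID all clients share the same feature marginal and the same local selection rule, so $a_{k,m}=a_m$ for every $k$. The increment then becomes
\[
\sum_m a_m\bigl(1-(1-a_m)^K\bigr),
\]
which is strictly positive as soon as some cell has $a_m\in(0,1]$ and $K\ge1$. It is zero only in the degenerate case where no cell is ever selected, i.e.\ all $a_m=0$, which corresponds to budget zero. This accounts for the ``or remains unchanged'' clause in the statement.

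I would also note that each term $a_m\bigl(1-(1-a_m)^K\bigr)$ is nondecreasing in $K$, so under IID the marginal redundancy grows with every added client. This makes precise the contrast with the disjoint-support case of part~1, where the increment is exactly zero.

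The only real obstacle is the modelling step, not the algebra. One must check that the cell-selection probabilities $a_{k,m}$ of the first $K$ clients are unchanged when client $K+1$ joins, so that $c_m^{(K)}$ is the same quantity in both $\bar{\Delta}_K^{\mathrm{red}}$ and $\bar{\Delta}_{K+1}^{\mathrm{red}}$. This holds if the per-client budgets $b_k$ are fixed as clients are added, rather than a fixed total $B$ being re-split among more clients. I would state this fixed-$b_k$ convention explicitly. If instead $B$ were redistributed, the $a_{k,m}$ would change with $K$ and part~3 would not apply directly.
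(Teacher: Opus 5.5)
Your proof is correct and is essentially the paper's argument. The paper substitutes $a_{k,m}=a_m$ into the redundancy formula and takes the finite difference in $K$, obtaining $\sum_m a_m\bigl[1-(1-a_m)^K\bigr]\ge 0$; this is exactly Part~3 of Proposition~\ref{prop:heterogeneity-gap} specialized to identical selection probabilities, which is what you do. Your side remarks also hold up: Part~3 already gives nonnegativity without the IID hypothesis, and your fixed-$b_k$ caveat is precisely the paper's ``fixed per-client selection behavior'' assumption (equal $b_k$ is likewise implicitly needed for IID data to yield $a_{k,m}=a_m$, which the paper simply takes as its definition of IID).
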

\begin{proof}
If all clients have identical cell-level selection probabilities
$a_{1,m}
=
\cdots
=
a_{K,m}
=
a_m$,
then
\begin{equation}
\label{eq:iid-gap}
\bar{\Delta}_K^{\mathrm{red}}
=
\sum_{m=1}^M
\left[
K a_m
-
\left(
1-(1-a_m)^K
\right)
\right],
\end{equation}
and
\begin{equation}
\label{eq:iid-gap-increment}
\bar{\Delta}_{K+1}^{\mathrm{red}}
-
\bar{\Delta}_K^{\mathrm{red}}
=
\sum_{m=1}^M
a_m
\left[
1-(1-a_m)^K
\right]
\geq 0.
\end{equation}  
\end{proof}

\begin{corollary}Adding a client whose accessible cells do not overlap with those of the existing clients introduces no additional redundancy.
\label{cor:disjoint-client}
\end{corollary}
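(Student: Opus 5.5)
The plan is to derive the corollary directly from part~3 of Proposition~\ref{prop:heterogeneity-gap}, under the same independence assumption on the events $\{S_k^\star\cap C_m\neq\emptyset\}$. Part~3 gives the exact increment
\[
\bar{\Delta}_{K+1}^{\mathrm{red}}-\bar{\Delta}_K^{\mathrm{red}}
=
\sum_{m=1}^M a_{K+1,m}\,c_m^{(K)} .
\]
So it suffices to show that every summand vanishes when the new client's accessible cells $A_{K+1}$ are disjoint from $\bigcup_{k=1}^K A_k$.

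To do this, split the sum over $m$ into two cases. If $m\notin A_{K+1}$, then $C_m\cap\mathcal{X}_{K+1}=\emptyset$. Since $S_{K+1}^\star\subseteq\mathcal{X}_{K+1}$, the set $S_{K+1}^\star$ cannot meet $C_m$, so $a_{K+1,m}=0$, as already noted after (\ref{eq:akm}). If instead $m\in A_{K+1}$, the disjointness hypothesis gives $m\notin A_k$ for every $k\le K$. Hence $a_{k,m}=0$ for all $k\le K$, and therefore $c_m^{(K)}=1-\prod_{k=1}^K(1-0)=0$. In both cases $a_{K+1,m}c_m^{(K)}=0$, so the increment is zero. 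This means $\bar{\Delta}_{K+1}^{\mathrm{red}}=\bar{\Delta}_K^{\mathrm{red}}$: the added client introduces no additional redundancy.

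As a consistency check, one could alternatively argue from the definition (\ref{eq:effective-gap}). The cells touched by $S_{K+1}^\star$ are disjoint from those touched by $\bigcup_{k\le K}S_k^\star$. Hence $\Phi$ is additive over this split, and $\Phi(S_{K+1}^\star)$ cancels the new term $\mathbb{E}[\Phi(S_{K+1}^\star)]$ exactly. This version does not even need the independence assumption. It also mirrors part~1 of the proposition, so I would mention it briefly.

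There is no real obstacle. The only point requiring care is justifying $a_{k,m}=0$ for $m\notin A_k$, which follows from $S_k^\star\subseteq\mathcal{X}_k$ in the definition of the local selection.
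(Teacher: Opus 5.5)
Your proposal is correct and matches the paper's proof: both apply Part 3 of Proposition~\ref{prop:heterogeneity-gap} and note that $a_{K+1,m}>0$ forces $m\in A_{K+1}$, hence $m\notin A_k$ for all $k\le K$, so $c_m^{(K)}=0$ and every summand $a_{K+1,m}c_m^{(K)}$ vanishes. Your side check is also valid: $\Phi$ is pointwise additive over selections that touch disjoint sets of cells, and this argument has the small advantage of not needing the cross-client independence assumption that Part 3 relies on.
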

\begin{proof}
\phantom\newline\newline
$
A_{K+1}\cap\bigcup_{k=1}^K A_k=\emptyset
~~ \implies ~~
\bar{\Delta}_{K+1}^{\mathrm{red}}
-
\bar{\Delta}_K^{\mathrm{red}}
=
0.
$
\end{proof}

Together, these results explain why homogeneous partitions may require stronger coordination than heterogeneous partitions: IID clients repeatedly spend their budgets on similar regions, whereas heterogeneity naturally reduces duplicated coverage.

\begin{observation}
\label{obs:neg_coor}
The theoretical coordination gap in Eq.~\eqref{eq:gap} is nonnegative because it is defined relative to the centralized optimum. The empirical gap between realizable centralized and local selection methods may nevertheless be negative. This can occur when a non-IID partition reveals semantic structure that is not fully represented by the embedding geometry, allowing independent local selection to outperform the centralized selection method (see Figure~\ref{fig:same_embedding_noniid_gap}).
\end{observation}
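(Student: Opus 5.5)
The plan has two parts, matching the two claims of Observation~\ref{obs:neg_coor}: a short feasibility argument for nonnegativity, then a constructive example for possible negativity.

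\textbf{Nonnegativity.} I would show that the tuple of locally optimal selections $(S_1^\star,\ldots,S_K^\star)$ is feasible for the maximization defining $\Phi_B^\star$.
\begin{itemize}
\item By definition, each $S_k^\star\subseteq\mathcal{X}_k$ and $|S_k^\star|=b_k$, with $\sum_k b_k=B$.
\item These are exactly the constraints of the coordinated problem, so $\Phi_B^\star\ge\Phi\bigl(\bigcup_{k}S_k^\star\bigr)$.
\item Hence $\Delta_K\ge 0$ pointwise, for any utility $\Phi$ and any realization of the data.
\item Taking expectations gives $\bar{\Delta}_K\ge 0$ as well.
\end{itemize}
No property of $\Phi$ (monotonicity, submodularity) is needed. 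This is only a statement about $\max$ over a superset of feasible choices.

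\textbf{Why the empirical gap can be negative.} I would point out two places where the realized comparison departs from Eq.~\eqref{eq:gap}.
\begin{itemize}
\item A realizable centralized method (e.g.\ TypiClust or ProbCover run on server-side embeddings) is a heuristic. It need not attain $\Phi_B^\star$.
\item The quantity actually measured is downstream accuracy, not the proxy $\Phi$ defined through the embedding cells $\mathcal{C}$. Negativity then only requires that the local selection score higher on the true objective than the centralized heuristic.
\end{itemize}

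\textbf{Toy construction.} I would build an example showing this can happen under a non-IID partition.
\begin{itemize}
\item Take $K$ clients, each holding a single class, so the partition is label-skewed.
\item Choose an embedding in which some classes are geometrically entangled. For instance, two classes share a single dense cell $C_m$ while another class is spread across many sparse cells.
\item Run a centralized coverage/typicality selection with budget $B$. It fills high-density cells and can place all of that cell's budget on one of the entangled classes, leaving the other class unlabeled.
\item Run local selection instead. Because each client only sees its own class, every client necessarily labels its own class, so all $K$ classes are covered.
\end{itemize}
Under a downstream accuracy proxy such as the number of classes represented, local selection then strictly wins. The realized gap is therefore negative, even though Eq.~\eqref{eq:gap} stays nonnegative for the embedding-based $\Phi$.

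\textbf{Main obstacle.} The hard step is making ``semantic structure not represented by the embedding geometry'' precise. I would formalize it as a mismatch between the cell partition $\mathcal{C}$ and the class partition, with a class-coverage utility $\Phi'$ differing from $\Phi$. The claim then becomes that the argmax under $\Phi$ is suboptimal under $\Phi'$, while the client partition happens to align with $\Phi'$.

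I would keep this as an existence example rather than a general theorem, and point to Figure~\ref{fig:same_embedding_noniid_gap} as the empirical instance.
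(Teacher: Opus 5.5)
The nonnegativity half of your proposal is correct and is exactly the paper's one-line justification: $(S_1^\star,\ldots,S_K^\star)$ satisfies the constraints $S_k\subseteq\mathcal{X}_k$, $|S_k|=b_k$ of the problem defining $\Phi_B^\star$, so $\Delta_K\ge 0$ pointwise. The paper supports the second half only empirically, through the same-embedding non-IID ablation (Figure~\ref{fig:same_embedding_noniid_gap}, Table~\ref{tab:ablation_noniid_alpha}). Your pointer to that figure carries the claim, but your toy construction, as written, does not.

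\textbf{Why the toy fails.} Both $\Phi_B^\star$ and the paper's realizable centralized selectors enforce per-client budgets. In the definition of $\Phi_B^\star$ this is $|S_k|=b_k$. In Appendix~\ref{app:client_constrained_selection} it is $|L|=b$ with $|L\cap X_i|\le b_i$ and $\sum_i b_i=b$, which forces equality. If every client holds a single class, client $j$ must spend all $b_j$ labels on its own data, hence on its own class, however the server coordinates. So the second entangled class cannot be ``left unlabeled.'' The set of represented classes is fixed by the budgets alone (class $k$ appears iff $b_k\ge 1$), identically for local and centralized selection. Your class-count proxy therefore gives a tie, not a strict win. You use the per-client constraints in the nonnegativity step and then drop them in the example; an unconstrained centralized selector is not the method the observation compares against.

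\textbf{How to repair it.} The client that owns the entangled class needs somewhere else to spend its budget, so clients must hold several classes, as in the Dirichlet splits actually used.
\begin{itemize}
\item Take $K=2$ and $b_1=b_2=2$. Client 1 holds classes $A,B$; client 2 holds $C,D$.
\item Let one dense cell contain many $A$ and many $C$ points, one cell contain $B$, and let $D$ occupy two further cells.
\item The coordinated coverage optimum must have client 1 cover the shared cell and the $B$-cell, and client 2 cover both $D$-cells. This gives $\Phi_B^\star=4$, and the optimum labels only $A,B,D$.
\item Locally, client 2's $\Phi$-optimal sets are any two of its three cells. A density-driven selector such as ProbCover's greedy takes the dense cell first, labeling $C$ and one $D$-cell.
\item Then $\Delta_K=4-3>0$ in $\Phi$, yet local selection covers all four classes while the coordinated one covers three.
\end{itemize}
The cross-client deduplication that $\Phi$ rewards is exactly what discards a semantically distinct class, which is the mechanism the observation describes.
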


\section{Method}
\label{sec:method}

We propose a three-phase FAL framework for cross-silo settings\footnote{While 'cross-silo' more precisely describes our setting, we henceforth use 'client' to align with prevailing FL nomenclature.}: (i) federated representation learning, (ii) privacy-conscious centralized active selection under client-level budgets, and (iii) federated downstream training (Figure~\ref{fig:method_visualization}). Raw data remain local throughout; only perturbed representations or their obfuscated summaries are communicated for query coordination.

\subsection{Three-Phase Pipeline}

\paragraph{Phase I: Federated Representation Learning}

The first phase (Figure~\ref{fig:method_visualization}, highlighted in teal) constructs a shared representation space across clients, enabling coordination during the subsequent active selection stage. Accordingly, we train a shared feature extractor using standard Federated Learning (FL), adopting FedAvg \citep{mcmahan2017communication} to train a SimCLR encoder in a federated manner (see Section~\ref{sec:related}). At each communication round, the server broadcasts the current encoder, clients perform local contrastive learning on their unlabeled data, and the server aggregates the resulting updates. This allows clients to collaboratively learn a common representation without exchanging raw data. Once training is complete, the global encoder is frozen and distributed to all clients, which use it to embed their local unlabeled data.

\paragraph{Phase II: Obfuscated Centralized Active Selection}
The second phase (Figure~\ref{fig:method_visualization}, highlighted purple) 
performs centralized active query selection over the clients' unlabeled data. 
To this end we adopt three methods that have demonstrated superior performance in the low-budget regime - \emph{ProbCover} \citep{yehuda2022active}, \emph{TypiClust} \citep{hacohen2022active} and \emph{MaxHerding} \citep{bae2024maxherding}, adapted to the current setting by adding local client budget constraints.

\subsubsection*{Client-Constrained Selection}

We apply centralized AL selection globally in the shared embedding space while enforcing client-level annotation budgets. For ProbCover and MaxHerding, the server greedily selects the feasible sample with the largest marginal coverage gain, excluding samples from clients whose budgets have been exhausted. For TypiClust, clustering is performed globally, and cluster representatives are allocated subject to the same client budgets. The formal objective, greedy procedure, and approximation guarantee are provided in Appendix~\ref{app:client_constrained_selection}.

\subsubsection*{Obfuscation}
\label{sec:obfuscate}

Directly transmitting embeddings may still introduce privacy risks, including reconstruction or inversion attacks. We therefore study two approaches for obfuscating the per-client embeddings before they are used for centralized selection. Both approaches aim to preserve enough geometric information for effective selection, while reducing the amount of information exposed to the server.

\paragraph{Approach 1: using controlled perturbation.}

\paragraph{Approach 2: using data aggregates.}
We follow the principle of geometric summarization
\citep[e.g.,][]{ye2021fedproto}, whereby clients communicate only
aggregate geometric information rather than individual embeddings.
To enable coordinated active selection under this constraint, we
introduce federated adaptations of ProbCover and TypiClust in Section~\ref{sec:adapted-method}.

\paragraph{Phase III: Federated Training of a Global Classifier}

In Phase III (Figure~\ref{fig:method_visualization}, highlighted blue) selected samples remain local and are used for federated downstream training. We evaluate a neural network trained on raw images, and as an effective alternative in low budgets, a shallow classifier trained over the shared representation learned in Phase I.

\subsection{Aggregate-Based Federated Active Selection}
\label{sec:adapted-method}

Sharing individual embeddings enables global coordination, but may reveal
information about individual samples. We therefore introduce two federated
adaptations of geometry-based active learning,
\emph{FederatedProbCover} and \emph{FederatedTypiClust}, that coordinate
selection across clients while communicating only geometric aggregates and
distances. Unlike independent client selection, these methods recover
the global geometric information needed for coordinated querying without
exposing individual embeddings.

\paragraph{FederatedProbCover (Alg.~\ref{alg:fedprobcover}).}
The key challenge in federating ProbCover is that its greedy criterion
depends on the \emph{global} number of samples newly covered by each query.
We replace direct access to the global embedding pool with a distributed
proposal-and-count procedure. At each round, each client performs a local
ProbCover step and sends only the centroid of the region induced by its best
candidate. The server broadcasts these proposals, and clients return the
number of currently uncovered local samples covered by each centroid.
Summing these counts yields the global coverage gain of every proposal
without revealing the covered samples themselves.

\begin{figure*}[t!]
    \centering
    \begin{subfigure}[t]{0.48\linewidth}
        \centering
        \includegraphics[width=\linewidth]{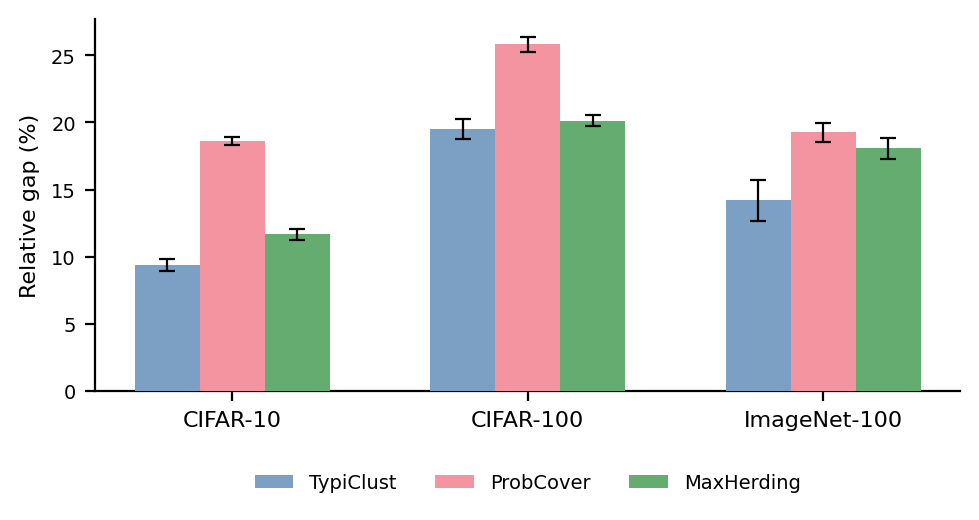}
        \caption{IID client partitions.}
        \label{fig:iid_coordination_gap}
    \end{subfigure}
    \hfill
    \begin{subfigure}[t]{0.48\linewidth}
        \centering
        \includegraphics[width=\linewidth]{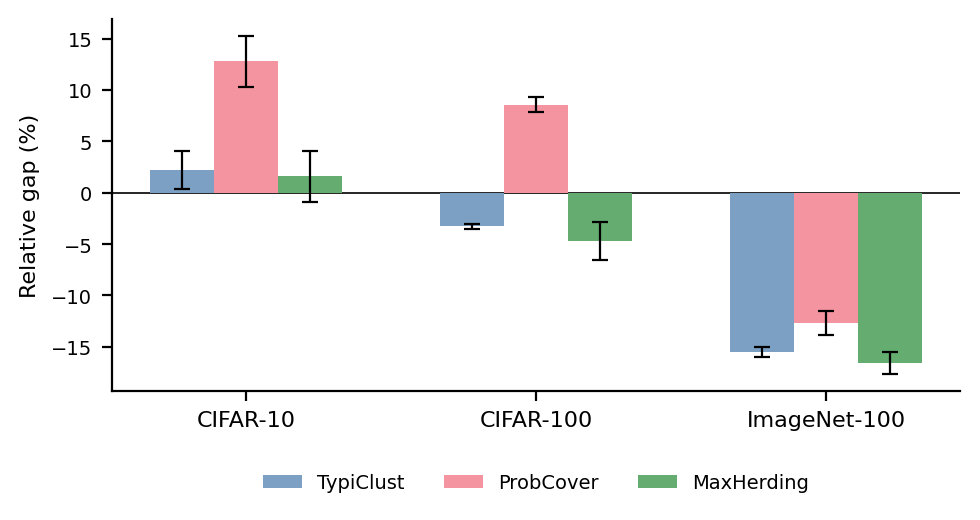}
        \caption{Non-IID client partitions with $\alpha=0.1$.}
        \label{fig:non_iid_coordination_gap}
    \end{subfigure}

    \caption{
    Empirical coordination gap across datasets and selection methods.
    Error bars denote standard error over 3 random client splits.
    }
    \label{fig:coordination_gap}
\end{figure*}

The server selects the
proposal with maximal global gain and queries the closest feasible sample
to that centroid across clients, subject to client annotation budgets.
All clients then update their local coverage state using the selected
centroid. Thus, subsequent rounds account for coverage already obtained
on \emph{other clients}, explicitly reducing the cross-client redundancy
of independent selection.

\begin{algorithm}[t]
\caption{\textsc{FederatedProbCover}}
\label{alg:fedprobcover}
\small
\begin{algorithmic}[1]
\While{annotation budget remains}
    \State Each client $k$ proposes centroid $c_k$ of its locally best
    uncovered region.
    \State Server broadcasts $\{c_k\}$ to all clients.
    \State Each client $j$ reports local coverage gain $g_{jk}$ for
    every $c_k$.
    \State Server selects
    $c^*=\arg\max_{c_k}\sum_j g_{jk}$.
    \State Each client reports its closest feasible sample to $c^*$
    and its distance.
    \State Server queries the globally closest candidate subject to
    client budgets.
    \State All clients mark samples covered by $c^*$ as covered.
\EndWhile
\end{algorithmic}
\end{algorithm}

\paragraph{FederatedTypiClust.}
TypiClust admits a particularly natural federated adaptation because its
selection criterion is already defined through cluster-level geometry.
We first construct global clusters using federated $k$-means: clients
communicate only per-cluster sums and counts, from which the server updates
the global centroids. Given these centroids, each client reports, for every
cluster, the distance to its closest eligible local sample. The server
orders clusters according to the TypiClust criterion - prioritizing clusters
containing fewer labeled samples and, as a tie-breaker, larger clusters - and
queries the globally closest feasible sample to each selected centroid,
while enforcing the per-client annotation budgets.

The two adaptations recover complementary forms of global geometric
information from aggregates: FederatedProbCover estimates
\emph{global marginal coverage} through distributed counting, whereas
FederatedTypiClust recovers \emph{global cluster structure} through
federated sufficient statistics. Neither requires transmitting individual
embeddings while retaining cross-client coordination.

\section{Empirical Results}

\begin{table*}[t!]
\centering
\small
\begin{tabular}{lccccccccc}
\toprule
& \multicolumn{3}{c}{CIFAR-10}
& \multicolumn{3}{c}{CIFAR-100}
& \multicolumn{3}{c}{ImageNet-100} \\
\cmidrule(lr){2-4}
\cmidrule(lr){5-7}
\cmidrule(lr){8-10}
Method & 10 & 50 & 100 & 100 & 500 & 1000 & 100 & 500 & 1000 \\
\midrule
\multicolumn{10}{l}{\textit{IID clients}} \\

Our method (ProbCover)
& \textbf{49.5$_{\pm3.1}$}
& \textbf{78.7$_{\pm0.5}$}
& \textbf{84.4$_{\pm0.1}$}
& \textbf{41.3$_{\pm0.1}$}
& \textbf{48.6$_{\pm0.6}$}
& \textbf{51.2$_{\pm0.6}$}
& \textbf{40.2$_{\pm0.6}$}
& \textbf{50.6$_{\pm0.4}$}
& \textbf{54.9$_{\pm0.1}$} \\

Ono et al. (ProbCover)
& 32.3$_{\pm3.0}$
& 64.8$_{\pm1.1}$
& 72.9$_{\pm1.1}$
& 20.8$_{\pm0.5}$
& 36.3$_{\pm0.3}$
& 41.3$_{\pm0.1}$
& 24.8$_{\pm0.9}$
& 41.4$_{\pm0.4}$
& 46.6$_{\pm0.2}$ \\

\cdashline{1-10}

Our method (TypiClust)
& \textbf{32.6$_{\pm5.5}$}
& \textbf{78.2$_{\pm0.7}$}
& \textbf{82.2$_{\pm0.7}$}
& \textbf{37.7$_{\pm0.5}$}
& \textbf{46.3$_{\pm0.1}$}
& \textbf{50.1$_{\pm0.1}$}
& \textbf{38.9$_{\pm0.7}$}
& \textbf{49.5$_{\pm0.5}$}
& \textbf{53.8$_{\pm0.4}$} \\

Ono et al. (TypiClust)
& 29.2$_{\pm5.2}$
& 69.4$_{\pm0.1}$
& 78.9$_{\pm0.5}$
& 25.7$_{\pm0.7}$
& 37.4$_{\pm0.4}$
& 41.1$_{\pm0.6}$
& 28.9$_{\pm0.8}$
& 42.6$_{\pm0.2}$
& 46.2$_{\pm1.0}$ \\

\cdashline{1-10}

Our method (MaxHerding)
& \textbf{61.7$_{\pm2.6}$}
& \textbf{83.1$_{\pm0.5}$}
& \textbf{84.2$_{\pm0.2}$}
& \textbf{36.2$_{\pm0.4}$}
& \textbf{46.9$_{\pm0.2}$}
& \textbf{50.2$_{\pm0.1}$}
& \textbf{36.1$_{\pm0.8}$}
& \textbf{49.1$_{\pm0.2}$}
& \textbf{54.2$_{\pm0.5}$} \\

Ono et al. (MaxHerding)
& 46.1$_{\pm1.6}$
& 71.2$_{\pm0.9}$
& 79.2$_{\pm0.4}$
& 22.5$_{\pm0.4}$
& 37.9$_{\pm0.3}$
& 41.6$_{\pm0.1}$
& 18.3$_{\pm0.1}$
& 40.3$_{\pm0.3}$
& 46.6$_{\pm0.5}$ \\

\midrule
\multicolumn{10}{l}{\textit{Non-IID clients ($\alpha=0.1$)}} \\

Our method (ProbCover)
& \textbf{35.3$_{\pm4.3}$}
& \textbf{66.5$_{\pm2.4}$}
& \textbf{77.8$_{\pm1.3}$}
& \textbf{31.5$_{\pm0.8}$}
& \textbf{44.7$_{\pm0.8}$}
& \textbf{48.3$_{\pm0.4}$}
& 32.2$_{\pm0.2}$
& 44.8$_{\pm0.1}$
& 49.7$_{\pm0.2}$ \\

Ono et al. (ProbCover)
& 26.0$_{\pm0.6}$
& 56.8$_{\pm1.5}$
& 69.8$_{\pm0.4}$
& 24.5$_{\pm0.7}$
& 40.8$_{\pm0.3}$
& 46.4$_{\pm0.2}$
& \textbf{32.6$_{\pm0.4}$}
& \textbf{50.5$_{\pm0.8}$}
& \textbf{56.5$_{\pm0.8}$} \\

\cdashline{1-10}

Our method (TypiClust)
& \textbf{31.9$_{\pm2.4}$}
& 67.2$_{\pm2.0}$
& \textbf{76.7$_{\pm1.8}$}
& \textbf{28.7$_{\pm0.8}$}
& 41.4$_{\pm0.5}$
& 46.0$_{\pm0.4}$
& 31.3$_{\pm0.9}$
& 43.2$_{\pm0.4}$
& 49.3$_{\pm0.3}$ \\

Ono et al. (TypiClust)
& 26.4$_{\pm1.4}$
& \textbf{67.4$_{\pm1.3}$}
& 76.3$_{\pm0.3}$
& 27.8$_{\pm1.2}$
& \textbf{43.4$_{\pm0.3}$}
& \textbf{47.1$_{\pm0.1}$}
& \textbf{37.7$_{\pm0.5}$}
& \textbf{50.8$_{\pm0.7}$}
& \textbf{54.8$_{\pm0.3}$} \\

\cdashline{1-10}

Our method (MaxHerding)
& \textbf{54.4$_{\pm2.3}$}
& 71.8$_{\pm2.4}$
& 78.5$_{\pm1.5}$
& 30.2$_{\pm1.5}$
& 43.1$_{\pm0.9}$
& 47.3$_{\pm0.2}$
& 30.9$_{\pm0.3}$
& 44.6$_{\pm0.2}$
& 49.4$_{\pm0.3}$ \\

Ono et al. (MaxHerding)
& 43.2$_{\pm4.9}$
& \textbf{73.5$_{\pm0.9}$}
& \textbf{78.8$_{\pm0.5}$}
& \textbf{31.5$_{\pm0.5}$}
& \textbf{46.1$_{\pm0.4}$}
& \textbf{48.6$_{\pm0.2}$}
& \textbf{37.7$_{\pm1.8}$}
& \textbf{51.6$_{\pm0.9}$}
& \textbf{56.1$_{\pm0.3}$} \\

\bottomrule
\end{tabular}

\caption{
Full-pipeline comparison between our method and the baseline adapted from \citet{ono2025exploring}. Our method performs globally coordinated selection in a shared federated embedding space and trains a shared classifier using FedAvg. The baseline performs selection and classifier training independently at each client and uses the prediction of the most confident client classifier at inference. Each pair uses the same active selection method. Entries report mean test accuracy (\%) $\pm$ standard error over three seeds at total annotation budgets corresponding to 1, 5, and 10 labeled samples per class. Bold indicates the best outcome within each pair.
}
\label{tab:conf_iid_noniid}
\end{table*}

\subsection{Evaluation Score}
\label{sec:results_evaluation}

The coordination gap $\Delta_K$ in (\ref{eq:gap}) measures the accuracy difference at a given annotation budget. We summarize this gap across budgets using its normalized AUC counterpart, termed \emph{Empirical Coordination Gap} and defined as follows:

\begin{definition}[Empirical Coordination Gap]
Let $\mathrm{AUC}_{\mathrm{cent}}$ and $\mathrm{AUC}_{\mathrm{pc}}$ denote the empirical areas under the centralized and per-client accuracy--budget curves, computed by trapezoidal integration over the evaluated budgets. We define the empirical coordination gap as
\[
\widehat{\Delta}_{\mathrm{AUC}}
=
100\cdot
\frac{\mathrm{AUC}_{\mathrm{cent}}-\mathrm{AUC}_{\mathrm{pc}}}
{\mathrm{AUC}_{\mathrm{cent}}}.
\]
\end{definition}

\subsection{Results: Full Pipeline}
\label{sec:non_iid_full_pipeline}
\label{sec:full-pipeline}

We first evaluate the complete pipeline without embedding obfuscation, comparing our coordinated approach with the strengthened per-client baseline described in Appendix~\ref{sec:experimental_methodology}. Both use the same shallow probabilistic classifier architecture; the effects of embedding obfuscation are evaluated separately in Section~\ref{sec:privacy_exp}.

\paragraph{IID Client Distributions}

The results for IID clients are shown in Figure~\ref{fig:iid_coordination_gap} and Table~\ref{tab:conf_iid_noniid}. In Figure~\ref{fig:high_budget_fal_comparison} we further compare our method against high budget FAL-specific approaches proposed in \citep{cao2023knowledge, kim2023re}. Across datasets and selection methods, our centralized pipeline consistently improves over the per-client baseline, demonstrating a positive gap under IID partitions; this agrees with the redundancy-based prediction of Section~\ref{sec:hetero}.

\begin{figure}[t!]
    \centering
    \includegraphics[width=\linewidth]{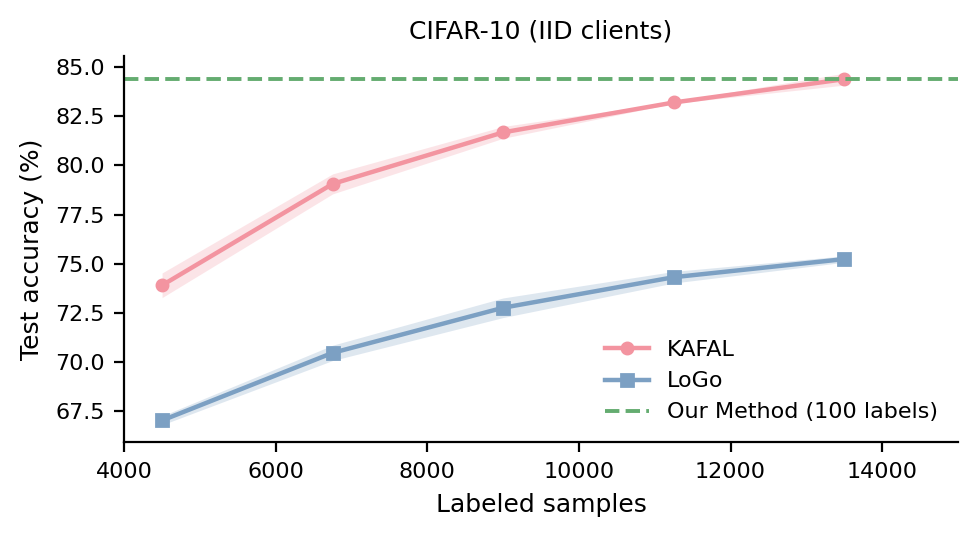}
    \caption{
       Comparison against two representative high-budget FAL methods \citep{cao2023knowledge, kim2023re}. The dashed line marks the accuracy achieved by our method using ProbCover selection with a substantially smaller annotation budget, highlighting the effectiveness of coordinated low-budget selection even relative to methods evaluated with larger budgets.}
    \label{fig:high_budget_fal_comparison}
\end{figure}

Implementation details and methodology are provided in Appendix~\ref{sec:experimental_methodology}.

\begin{figure}[t!]        
\centering
\includegraphics[width=\linewidth]{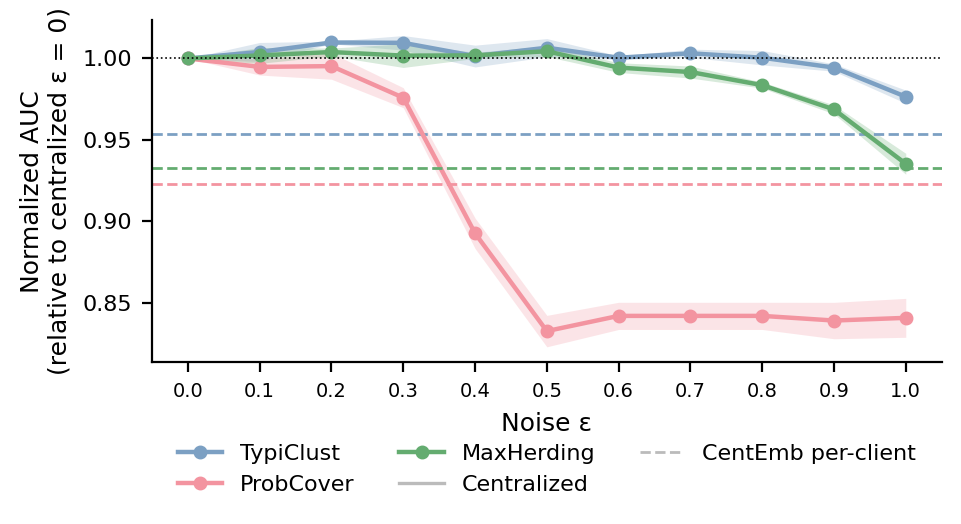}
\caption{The normalized AUC as a function of the embedding noise level. Dashed lines show the corresponding aligned embedding per-client baselines for each method.}
\label{fig:noise_auc_vs_eps}
\vspace{-.5cm}
\end{figure}

\paragraph{Non-IID Client Distributions}

The results for non-IID clients are shown in  Figure~\ref{fig:non_iid_coordination_gap} and Table~\ref{tab:conf_iid_noniid}. Across datasets and selection methods, the coordination gap now decreases substantially relative to the IID setting and sometimes becomes negative.
This reduction is consistent with the predicted heterogeneity reversal analyzed in Section~\ref{sec:hetero}. The negative gaps in some configurations are also consistent with Observation~\ref{obs:neg_coor}, as  maximum-confidence aggregation may further benefit from client specialization.

\subsection{Privacy-Preserving Data Obfuscation}
\label{sec:privacy_exp}

In this section, we evaluate the two obfuscation mechanisms introduced in Section~\ref{sec:obfuscate}: controlled embedding perturbation and centroid-based communication.

\begin{figure*}[t]
    \centering
    \begin{subfigure}[t]{0.48\linewidth}
        \centering
        \includegraphics[width=\linewidth]{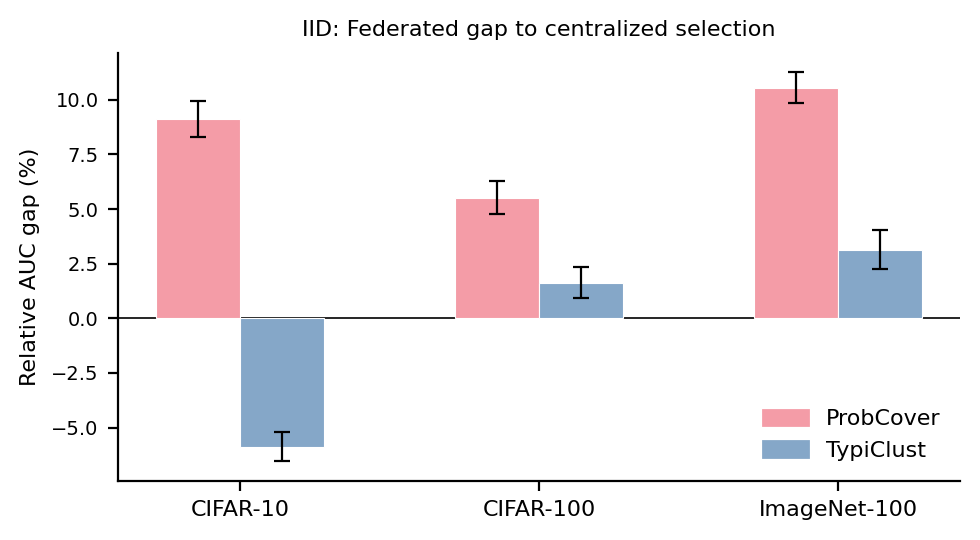}
        \caption{
        The \emph{empirical coordination gap} between centralized selection and its federated counterpart. Lower indicates closer agreement with centralized selection.
        }
        \label{fig:fed_classic_iid_fed_vs_cent}
    \end{subfigure}
    \hfill
    \begin{subfigure}[t]{0.48\linewidth}
        \centering
        \includegraphics[width=\linewidth]{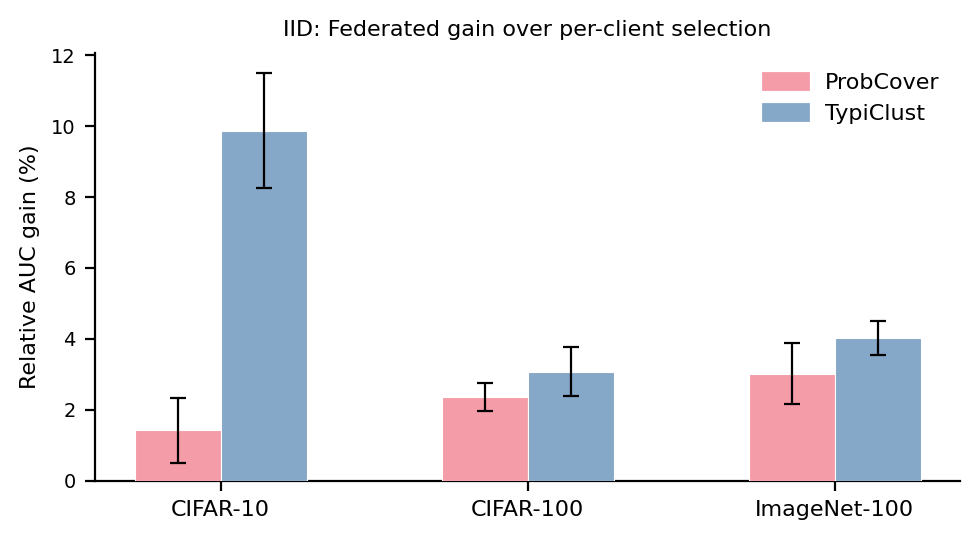}
        \caption{
        The \emph{empirical coordination gap} over independent per-client selection in the shared embedding space. Higher indicates stronger benefit from coordination.
        }
        \label{fig:fed_classic_iid_fed_vs_cemb}
    \end{subfigure}

    \caption{
    Centroid-based federated variants of ProbCover and TypiClust under IID client partitions.
    \textbf{Left:} remaining gap  to fully centralized selection.
    \textbf{Right:} improvement over independent per-client selection in the shared embedding space.
    Both quantities are computed analogously to the empirical coordination gap; error bars denote standard error over three seeds.
    }
    \label{fig:fed_classic_iid}
\end{figure*}

\begin{table*}[t]
\centering
\small
\begin{tabular}{lccccccccc}
\toprule
& \multicolumn{3}{c}{CIFAR-10} & \multicolumn{3}{c}{CIFAR-100} & \multicolumn{3}{c}{ImageNet-100} \\
\cmidrule(lr){2-4} \cmidrule(lr){5-7} \cmidrule(lr){8-10}
Method & 10 & 50 & 100 & 100 & 500 & 1000 & 100 & 500 & 1000 \\
\midrule
\multicolumn{10}{l}{\textit{ProbCover}} \\
Centralized & \textbf{49.5$_{\pm3.1}$} & \textbf{78.7$_{\pm0.5}$} & \textbf{84.4$_{\pm0.1}$} & 41.3$_{\pm0.1}$ & \textbf{48.6$_{\pm0.6}$} & \textbf{51.2$_{\pm0.6}$} & 40.2$_{\pm0.6}$ & \textbf{50.6$_{\pm0.4}$} & \textbf{54.9$_{\pm0.1}$} \\
Federated & 48.2$_{\pm3.3}$ & 73.9$_{\pm2.0}$ & 79.1$_{\pm0.3}$ & \textbf{42.0$_{\pm0.1}$} & 45.3$_{\pm0.2}$ & 48.8$_{\pm0.3}$ & \textbf{40.5$_{\pm0.5}$} & 44.4$_{\pm0.3}$ & 48.8$_{\pm0.2}$ \\
CentEmb per-client & 38.1$_{\pm3.0}$ & 73.8$_{\pm1.1}$ & 79.1$_{\pm0.1}$ & 27.3$_{\pm0.4}$ & 45.2$_{\pm0.2}$ & 49.8$_{\pm0.4}$ & 23.4$_{\pm1.0}$ & 44.4$_{\pm0.1}$ & 50.2$_{\pm0.5}$ \\
\cdashline{1-10}
\multicolumn{10}{l}{\textit{TypiClust}} \\
Centralized & 32.6$_{\pm5.5}$ & 78.2$_{\pm0.7}$ & \textbf{82.2$_{\pm0.7}$} & 37.7$_{\pm0.5}$ & \textbf{46.3$_{\pm0.1}$} & \textbf{50.1$_{\pm0.1}$} & \textbf{38.9$_{\pm0.7}$} & \textbf{49.5$_{\pm0.5}$} & \textbf{53.8$_{\pm0.4}$} \\
Federated & \textbf{62.2$_{\pm1.4}$} & \textbf{79.9$_{\pm1.2}$} & 81.8$_{\pm0.3}$ & \textbf{37.8$_{\pm0.2}$} & 45.0$_{\pm0.5}$ & 49.3$_{\pm0.3}$ & 36.3$_{\pm0.7}$ & 47.8$_{\pm0.8}$ & 53.0$_{\pm0.2}$ \\
CentEmb per-client & 31.8$_{\pm5.2}$ & 73.2$_{\pm0.8}$ & \textbf{82.2$_{\pm0.3}$} & 30.9$_{\pm0.4}$ & 44.6$_{\pm0.2}$ & 47.9$_{\pm0.2}$ & 29.6$_{\pm0.8}$ & 46.0$_{\pm0.3}$ & 51.1$_{\pm0.5}$ \\
\bottomrule
\end{tabular}
\caption{Test accuracy (\%) at 1, 5, and 10 labeled samples per class, comparing fully centralized selection, the federated centroid-based variant (FedProbCover/FedTypiClust), and the noiseless aligned-embedding per-client baseline (CentEmb per-client), under IID client partitions. Mean $\pm$ standard error over 3 seeds. Bold marks the best of the three rows per column, within each method block.}
\label{tab:federated_algorithms}
\end{table*}

\paragraph{Controlled embedding perturbation.}
We compare noisy centralized selection with independent per-client selection in the aligned embedding space. The noise level $\epsilon$ denotes the target expected $\ell_2$ displacement between each original and perturbed unit-normalized embedding. TypiClust and MaxHerding incur little or no accuracy loss up to $\epsilon=0.6$, whereas ProbCover is more sensitive to perturbation, see Figure~\ref{fig:noise_auc_vs_eps} and Appendix~\ref{app:noise} for details.

\paragraph{Federated Versions of the ``Classic'' AL Selection Algorithms.} We also evaluate the two aggregate-based FAL methods described in Section~\ref{sec:adapted-method}  under IID client partitions. Figure~\ref{fig:fed_classic_iid} and Table~\ref{tab:federated_algorithms} report their remaining gap to centralized selection and their improvement over independent per-client selection in the shared embedding space. Both outperform independent per-client selection, indicating that aggregate-based cross-client coordination reduces query redundancy. FederatedTypiClust closely matches centralized selection, whereas FederatedProbCover retains a larger gap.

\begin{figure*}[t!]
    \centering

    \begin{subfigure}[t]{0.48\linewidth}
        \centering
        \includegraphics[width=\linewidth]{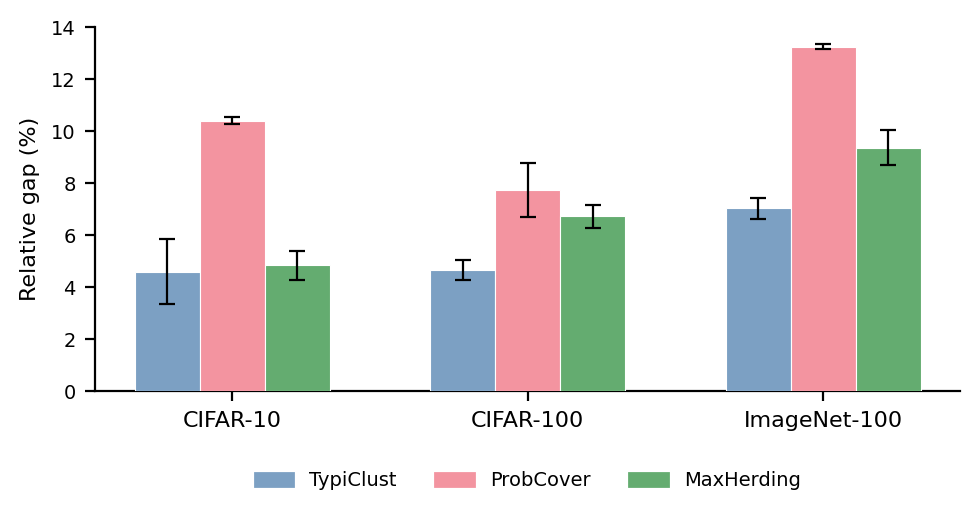}
        \caption{IID client partitions.}
        \label{fig:same_embedding_iid_gap}
    \end{subfigure}
    \hfill
    \begin{subfigure}[t]{0.48\linewidth}
        \centering
        \includegraphics[width=\linewidth]{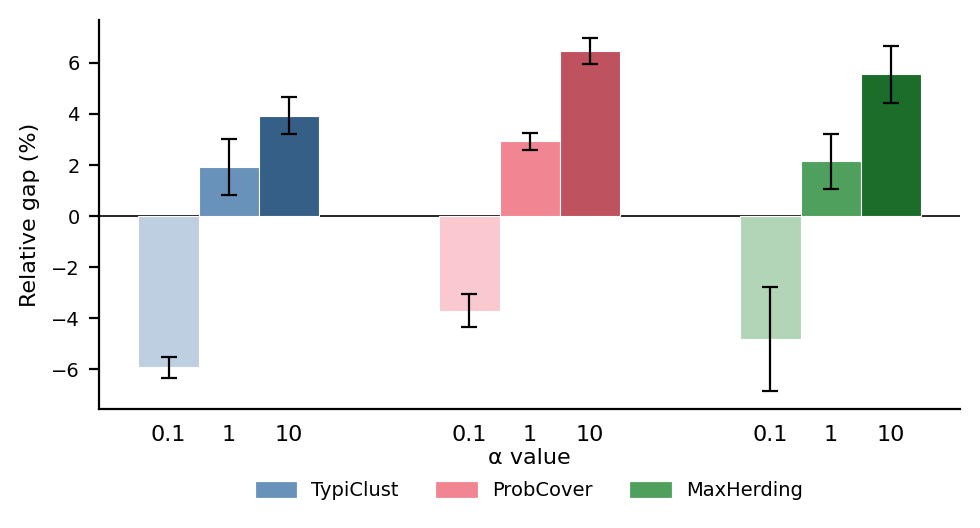}
        \caption{Non-IID client partitions.}
        \label{fig:same_embedding_noniid_gap}
    \end{subfigure}

    \caption{
    Same-embedding selection-stage ablation.
    In both settings, centralized and per-client selection operate on the same federated embedding, isolating the effect of global query coordination from representation quality.
    \textbf{Left:} Empirical coordination gap under IID client partitions across datasets and selection methods.
    \textbf{Right:} Empirical coordination gap under non-IID client partitions as the Dirichlet parameter $\alpha$ varies.
    Positive values indicate that centralized selection improves over independent per-client selection, while negative values indicate that per-client selection performs better.
    }
    \label{fig:same_embedding_auc_gap}
\end{figure*}

\subsection{Ablation Study}
\label{sec:same_embedding_ablation}

To isolate query coordination from representation quality, we compare centralized and per-client selection within the same federated embedding. Figure~\ref{fig:same_embedding_auc_gap} shows a positive coordination benefit under IID partitions that decreases and can become negative with increasing heterogeneity, mirroring the full-pipeline trend. This supports the interpretation that the reversal arises from the selection stage itself. Absolute accuracies are reported in Appendix~\ref{sec:ablation}.

\section{Discussion}
\label{sec:discussion}

Our results highlight the importance of globally coordinated query selection in low-budget FAL. By using a shared federated representation, our framework enables selection across clients while keeping raw data private and downstream training federated. This allows standard low-budget AL methods to reduce cross-client redundancy without violating privacy constraints.
Our results also show that coordination utility depends strongly on client heterogeneity: its benefit is largest under homogeneous partitions and declines as heterogeneity itself provides cross-client diversity.

These results suggest that low-budget FAL is fundamentally a joint selection--representation problem. Coordination is most beneficial when clients are similar enough to produce redundant local queries and can be compared meaningfully in a shared representation space. More broadly, our framework provides a modular foundation for future FAL methods that jointly adapt selection, representation learning, and privacy-preserving coordination.

\subsection*{\textbf{Acknowledgments}}
This work was supported by a grant from the Gatsby Charitable Foundation and AFOSR award FA8655-24-1-7006.

\bibliographystyle{plainnat}
\bibliography{refs}

@inproceedings{deng2009imagenet,
  title={ImageNet: A Large-Scale Hierarchical Image Database},
  author={Jia Deng and Wei Dong and Richard Socher and Li-Jia Li and Kai Li and Li Fei-Fei},
  booktitle={IEEE Conference on Computer Vision and Pattern Recognition},
  year={2009}
}

@article{kairouz2021advances,
  title={Advances and Open Problems in Federated Learning},
  author={Peter Kairouz and H. Brendan McMahan and Brendan Avent and Aurélien Bellet and Mehdi Bennis and Arjun Nitin Bhagoji and Keith Bonawitz and Zachary Charles and Graham Cormode and Rachel Cummings},
  journal={Foundations and Trends in Machine Learning},
  year={2021}
}

@inproceedings{lewis1994sequential,
  title={A Sequential Algorithm for Training Text Classifiers},
  author={David D. Lewis and William A. Gale},
  booktitle={SIGIR},
  year={1994}
}

@inproceedings{hacohen2022active,
  title={Active Learning on a Budget: Opposite Strategies Suit High and Low Budgets},
  author={Guy Hacohen and Avihu Dekel and Daphna Weinshall},
  booktitle={ICML},
  year={2022}
}

@inproceedings{bae2024maxherding,
  title={Generalized coverage for more robust low-budget active learning},
  author={Bae, Wonho and Noh, Junhyug and Sutherland, Danica J},
  booktitle={European conference on computer vision},
  pages={318--334},
  year={2024},
  organization={Springer}
}

@article{goetz2019active,
  title={Active federated learning},
  author={Goetz, Jack and Malik, Kshitiz and Bui, Duc and Moon, Seungwhan and Liu, Honglei and Kumar, Anuj},
  journal={arXiv preprint arXiv:1909.12641},
  year={2019}
}

@article{ahn2024federated,
  title={Federated active learning (f-al): an efficient annotation strategy for federated learning},
  author={Ahn, Jin-Hyun and Ma, Yeeun and Park, Seoyun and You, Cheolwoo},
  journal={IEEE Access},
  volume={12},
  pages={39261--39269},
  year={2024},
  publisher={IEEE}
}

@inproceedings{kim2022lg,
  title={Lg-fal: Federated active learning strategy using local and global models},
  author={Kim, S and Bae, S and Yun, Se-Young and Song, Hwanjun},
  booktitle={Proceedings of the ICML Workshop on Adaptive Experimental Design and Active Learning in the Real World, Baltimore, MD, USA},
  pages={127--139},
  year={2022}
}

@inproceedings{cao2023knowledge,
  title={Knowledge-aware federated active learning with non-iid data},
  author={Cao, Yu-Tong and Shi, Ye and Yu, Baosheng and Wang, Jingya and Tao, Dacheng},
  booktitle={Proceedings of the IEEE/CVF International Conference on Computer Vision},
  pages={22279--22289},
  year={2023}
}

@inproceedings{zhang2023affectfal,
  title={AffectFAL: Federated active affective computing with non-IID data},
  author={Zhang, Zixin and Qi, Fan and Li, Shuai and Xu, Changsheng},
  booktitle={Proceedings of the 31st ACM International Conference on Multimedia},
  pages={871--882},
  year={2023}
}

@inproceedings{kim2023re,
  title={Re-thinking federated active learning based on inter-class diversity},
  author={Kim, SangMook and Bae, Sangmin and Song, Hwanjun and Yun, Se-Young},
  booktitle={Proceedings of the IEEE/CVF Conference on Computer Vision and Pattern Recognition},
  pages={3944--3953},
  year={2023}
}

@inproceedings{jing2024fedsc,
  title={FedSC: Provable Federated Self-supervised Learning with Spectral Contrastive Objective over Non-iid Data},
  author={Jing, Shusen and Yu, Anlan and Zhang, Shuai and Zhang, Songyang},
  booktitle={International Conference on Machine Learning},
  pages={22304--22325},
  year={2024},
  organization={PMLR}
}

@inproceedings{ono2025exploring,
  title={Exploring the Possibility of TypiClust for Low-Budget Federated Active Learning},
  author={Ono, Yuta and Nakamura, Hiroshi and Takase, Hideki},
  booktitle={2025 IEEE 49th Annual Computers, Software, and Applications Conference (COMPSAC)},
  pages={648--653},
  year={2025},
  organization={IEEE}
}

@article{zhuang2021divergence,
  title={Divergence-aware federated self-supervised learning},
  author={Zhuang, Weiming and Wen, Yonggang and Zhang, Shuai},
  journal={arXiv preprint arXiv:2204.04385},
  year={2022}
}

@inproceedings{ye2021fedproto,
  title={Fedproto: Federated prototype learning across heterogeneous clients},
  author={Tan, Yue and Long, Guodong and Liu, Lu and Zhou, Tianyi and Lu, Qinghua and Jiang, Jing and Zhang, Chengqi},
  booktitle={Proceedings of the AAAI conference on artificial intelligence},
  volume={36:8},
  pages={8432--8440},
  year={2022}
}

@inproceedings{li2021federatedcontrastive,
  title={Model-contrastive federated learning},
  author={Li, Qinbin and He, Bingsheng and Song, Dawn},
  booktitle={2021 IEEE/CVF Conference on Computer Vision and Pattern Recognition (CVPR)},
  pages={10708--10717},
  year={2021},
  organization={IEEE}
}

@inproceedings{li2020federated,
  title={Federated Optimization in Heterogeneous Networks},
  author={Tian Li and Anit Kumar Sahu and Ameet Talwalkar and Virginia Smith},
  booktitle={MLSys},
  year={2020}
}

@inproceedings{karimireddy2020scaffold,
  title={SCAFFOLD: Stochastic Controlled Averaging for Federated Learning},
  author={Sai Praneeth Karimireddy and Satyen Kale and Mehryar Mohri and Sashank J. Reddi and Sebastian Stich and Ananda Theertha Suresh},
  booktitle={ICML},
  year={2020}
}

@inproceedings{mcmahan2017communication,
  title={Communication-efficient learning of deep networks from decentralized data},
  author={McMahan, Brendan and Moore, Eider and Ramage, Daniel and Hampson, Seth and y Arcas, Blaise Aguera},
  booktitle={Artificial intelligence and statistics},
  pages={1273--1282},
  year={2017},
  organization={Pmlr}
}

@inproceedings{dong2021federated,
  title={Federated contrastive learning for decentralized unlabeled medical images},
  author={Dong, Nanqing and Voiculescu, Irina},
  booktitle={International Conference on Medical Image Computing and Computer-Assisted Intervention},
  pages={378--387},
  year={2021},
  organization={Springer}
}

@inproceedings{han2022fedx,
  title={Fedx: Unsupervised federated learning with cross knowledge distillation},
  author={Han, Sungwon and Park, Sungwon and Wu, Fangzhao and Kim, Sundong and Wu, Chuhan and Xie, Xing and Cha, Meeyoung},
  booktitle={European Conference on Computer Vision},
  pages={691--707},
  year={2022},
  organization={Springer}
}

@inproceedings{seo2024relaxed,
  title={Relaxed contrastive learning for federated learning},
  author={Seo, Seonguk and Kim, Jinkyu and Kim, Geeho and Han, Bohyung},
  booktitle={Proceedings of the IEEE/CVF Conference on Computer Vision and Pattern Recognition},
  pages={12279--12288},
  year={2024}
}

@inproceedings{louizosmutual,
  title={A Mutual Information Perspective on Federated Contrastive Learning},
  author={Louizos, Christos and Reisser, Matthias and Korzhenkov, Denis},
  booktitle={The Twelfth International Conference on Learning Representations},
  year={2024}
}

@article{ahmed2020active,
  title={Active Learning Based Federated Learning for Waste and Natural Disaster Image Classification},
  author={Ahmed, Lulwa and Ahmad, Kashif and Said, Naina and Qolomany, Basheer and Qadir, Junaid and Al-Fuqaha, Ala},
  journal={IEEE Access},
  volume={8},
  pages={208518--208531},
  year={2020},
  doi={10.1109/ACCESS.2020.3038676}
}

@inproceedings{chatzikokolakis2013broadening,
  title={Broadening the scope of differential privacy using metrics},
  author={Chatzikokolakis, Konstantinos and Andr{\'e}s, Miguel E and Bordenabe, Nicol{\'a}s Emilio and Palamidessi, Catuscia},
  booktitle={international symposium on privacy enhancing technologies symposium},
  pages={82--102},
  year={2013},
  organization={Springer}
}

@article{wang2014new,
  title={A new active learning method in Bayesian network},
  author={Wang, Jing and Shang, Xiaopu},
  journal={Applied Intelligence},
  volume={41},
  pages={1087--1103},
  year={2014},
  publisher={Springer}
}

@inproceedings{scheffer2001active,
  title={Active hidden Markov models for information extraction},
  author={Scheffer, Tobias and Decomain, Christian and Wrobel, Stefan},
  booktitle={International Conference on Industrial, Engineering and Other Applications of Applied Intelligent Systems},
  pages={309--318},
  year={2001},
  publisher={Springer}
}

@article{yehuda2022active,
  title={Active learning through a covering lens},
  author={Yehuda, Ofer and Dekel, Avihu and Hacohen, Guy and Weinshall, Daphna},
  journal={Advances in Neural Information Processing Systems},
  volume={35},
  pages={22354--22367},
  year={2022}
}

@techreport{krizhevsky2009learning,
  title={Learning Multiple Layers of Features from Tiny Images},
  author={Alex Krizhevsky and Geoffrey Hinton},
  institution={University of Toronto},
  year={2009}
}


\appendix
\section*{\centering Appendix}

\section{Experimental Setup and Methodology}
\label{sec:experimental_methodology}
{Abl}
We evaluate CIFAR-10, CIFAR-100~\citep{krizhevsky2009learning}, and ImageNet-100~\citep{deng2009imagenet} using TypiClust, MaxHerding, and ProbCover (see Section~\ref{sec:related}). Following the procedure described in Section~\ref{sec:method}, the SimCLR feature extractor is trained with FedAvg for 1,000 rounds, with one local epoch per round and batch size 256. Downstream classification uses a one-hidden-layer network with 256 hidden units trained with FedAvg.

We compare against a strengthened per-client baseline adapted from \citet{ono2025exploring}, which was shown to outperform other FAL baselines in the low-budget regime. In this baseline, clients learn separate feature spaces and perform active selection locally. Since these feature spaces are not aligned, standard FL aggregation is not directly applicable; we therefore train one downstream pipeline per client. At inference time, all client pipelines evaluate each test sample, and the final prediction is taken from the pipeline assigning the highest probability to its predicted class. We use this maximum-confidence aggregation under both IID and non-IID client partitions.  
This further strengthens the original ResNet-based protocol of Ono et al.~\cite{ono2025exploring}, as classifiers trained on self-supervised features have been shown to substantially outperform ResNet-based classifiers trained on raw images in the low-budget regime~\cite{hacohen2022active,yehuda2022active}.

In all experiments throughout the paper, CIFAR-10 is split across two clients, while CIFAR-100 and ImageNet-100 are split across four clients. Client partitions are generated as described in Appendix~\ref{app:split}. Selection hyperparameters are set according to the values specified in Appendix~\ref{app:hyperparameters}. All experiments are repeated over three random seeds (0-2), and we report the mean performance together with the standard error across seeds.

The experiments were run on a small local GPU cluster; each experiment used between 1 and 4 GPUs, with runtimes ranging from roughly one hour to one day depending on the dataset, method, and budget configuration.

\section{Client-Constrained Active Selection}
\label{app:client_constrained_selection}

We adapt ProbCover and TypiClust to global selection under
client-level annotation budgets.

\paragraph{Client-constrained ProbCover.}
Let $\mathcal{X}$ denote the embedding space and let
\[
X
=
\bigsqcup_{i=1}^{K} X_i
\subseteq \mathcal{X}
\]
denote the global unlabeled set, where $X_i$ is held by client $i$.
Let $P$ denote the underlying data distribution, $b$ the global
annotation budget, and $b_i$ the annotation budget of client $i$,
assuming $\sum_{i=1}^{K} b_i = b$.

Following ProbCover~\cite{yehuda2022active}, define
\[
\begin{aligned}
B_\delta(x)
&=
\left\{
x' \in \mathcal{X} : \|x-x'\|_2 \leq \delta
\right\},\\
C(L,\delta)
&=
\bigcup_{x\in L} B_\delta(x).
\end{aligned}
\]

\begin{definition}[Client-Constrained Max Probability Cover]
The client-constrained extension of Max Probability Cover is
\[
L^\star
\in
\arg\max_{\substack{
    L\subseteq X,\ |L|=b\\
    |L\cap X_i|\leq b_i,\ \forall i
}}
P\bigl(C(L,\delta)\bigr).
\]
\end{definition}

Using the empirical distribution on $X$, this becomes
\[
L^\star
\in
\arg\max_{\substack{
    L\subseteq X,\ |L|=b\\
    |L\cap X_i|\leq b_i,\ \forall i
}}
\left|
\bigcup_{x\in L}
\left(B_\delta(x)\cap X\right)
\right|.
\]
This is a typed, or colored, maximum-coverage problem, where each
candidate is associated with its originating client.

The coverage objective remains monotone submodular. The feasible sets
\[
\mathcal{I}
=
\left\{
L\subseteq X:
|L|\leq b,\ 
|L\cap X_i|\leq b_i\ \forall i
\right\}
\]
form a truncated partition matroid. Consequently, the standard greedy
algorithm achieves a $1/2$ approximation, compared with the
$(1-1/e)$ guarantee obtained by ProbCover under a single cardinality
constraint.

At each iteration, the server selects the feasible candidate with the
largest marginal coverage gain,
\[
\Delta(x\mid L)
=
\left|
\left(B_\delta(x)\cap X\right)
\setminus
\bigcup_{z\in L}\left(B_\delta(z)\cap X\right)
\right|.
\]
Candidates belonging to clients whose budgets have been exhausted are
excluded, and selection continues until the global budget $b$ is
reached.

\paragraph{Client-constrained TypiClust.}
TypiClust is applied to the aggregated embedding set, with cluster
representatives selected subject to the client-level annotation budgets.

\section{Ablation Study}
\label{sec:ablation}

Tables~\ref{tab:ablation_iid} and~\ref{tab:ablation_noniid_alpha} provide the absolute accuracies underlying the same-embedding coordination-gap results in Figure~\ref{fig:same_embedding_auc_gap}, for IID and non-IID data client distributions respectively. Centralized and per-client selection use the identical Phase-I embedding and downstream training protocol, differing only in query selection. Note that Table~\ref{tab:ablation_noniid_alpha} reports absolute accuracies as the Dirichlet parameter varies. The decreasing coordination benefit as $\alpha$ decreases corresponds to the trend summarized in Figure~\ref{fig:same_embedding_auc_gap}.

\begin{table*}[t]
\centering
\small
\begin{tabular}{lccccccccc}
\toprule
& \multicolumn{3}{c}{CIFAR-10} & \multicolumn{3}{c}{CIFAR-100} & \multicolumn{3}{c}{ImageNet-100} \\
\cmidrule(lr){2-4} \cmidrule(lr){5-7} \cmidrule(lr){8-10}
Method & 10 & 50 & 100 & 100 & 500 & 1000 & 100 & 500 & 1000 \\
\midrule
\multicolumn{10}{l}{\textit{ProbCover}} \\
Centralized & \textbf{49.5$_{\pm3.1}$} & \textbf{78.7$_{\pm0.5}$} & \textbf{84.4$_{\pm0.1}$} & \textbf{41.3$_{\pm0.1}$} & \textbf{48.6$_{\pm0.6}$} & \textbf{51.2$_{\pm0.6}$} & \textbf{40.2$_{\pm0.6}$} & \textbf{50.6$_{\pm0.4}$} & \textbf{54.9$_{\pm0.1}$} \\
CentEmb per-client & 38.1$_{\pm3.0}$ & 73.8$_{\pm1.1}$ & 79.1$_{\pm0.1}$ & 27.3$_{\pm0.4}$ & 45.2$_{\pm0.2}$ & 49.8$_{\pm0.4}$ & 23.4$_{\pm1.0}$ & 44.4$_{\pm0.1}$ & 50.2$_{\pm0.5}$ \\
\cdashline{1-10}
\multicolumn{10}{l}{\textit{TypiClust}} \\
Centralized & \textbf{32.6$_{\pm5.5}$} & \textbf{78.2$_{\pm0.7}$} & \textbf{82.2$_{\pm0.7}$} & \textbf{37.7$_{\pm0.5}$} & \textbf{46.3$_{\pm0.1}$} & \textbf{50.1$_{\pm0.1}$} & \textbf{38.9$_{\pm0.7}$} & \textbf{49.5$_{\pm0.5}$} & \textbf{53.8$_{\pm0.4}$} \\
CentEmb per-client & 31.8$_{\pm5.2}$ & 73.2$_{\pm0.8}$ & \textbf{82.2$_{\pm0.3}$} & 30.9$_{\pm0.4}$ & 44.6$_{\pm0.2}$ & 47.9$_{\pm0.2}$ & 29.6$_{\pm0.8}$ & 46.0$_{\pm0.3}$ & 51.1$_{\pm0.5}$ \\
\cdashline{1-10}
\multicolumn{10}{l}{\textit{MaxHerding}} \\
Centralized & \textbf{61.7$_{\pm2.6}$} & \textbf{83.1$_{\pm0.5}$} & \textbf{84.2$_{\pm0.2}$} & \textbf{36.2$_{\pm0.4}$} & \textbf{46.9$_{\pm0.2}$} & \textbf{50.2$_{\pm0.1}$} & \textbf{36.1$_{\pm0.8}$} & \textbf{49.1$_{\pm0.2}$} & \textbf{54.2$_{\pm0.5}$} \\
CentEmb per-client & 48.2$_{\pm2.4}$ & 79.1$_{\pm0.9}$ & 83.3$_{\pm0.9}$ & 25.0$_{\pm0.6}$ & 44.8$_{\pm0.3}$ & 48.7$_{\pm0.4}$ & 20.7$_{\pm0.7}$ & 44.9$_{\pm0.3}$ & 51.0$_{\pm0.2}$ \\
\bottomrule
\end{tabular}
\caption{Same-embedding selection-stage ablation (Figure~\ref{fig:same_embedding_iid_gap}, IID): test accuracy (\%) at 1, 5, and 10 labeled samples per class, comparing centralized selection to per-client selection on the identical shared embedding (CentEmb per-client). Mean $\pm$ standard error over 3 seeds. Bold marks the better of the pair per column, within each method block.}
\label{tab:ablation_iid}
\end{table*}

\begin{table*}[t]
\centering
\small
\begin{tabular}{lccccccccc}
\toprule
& \multicolumn{3}{c}{$\alpha=0.1$} & \multicolumn{3}{c}{$\alpha=1$} & \multicolumn{3}{c}{$\alpha=10$} \\
\cmidrule(lr){2-4} \cmidrule(lr){5-7} \cmidrule(lr){8-10}
Method & 100 & 500 & 1000 & 100 & 500 & 1000 & 100 & 500 & 1000 \\
\midrule
\multicolumn{10}{l}{\textit{ProbCover}} \\
Centralized & 31.5$_{\pm0.8}$ & 44.7$_{\pm0.8}$ & 48.3$_{\pm0.4}$ & \textbf{39.5$_{\pm0.1}$} & \textbf{47.4$_{\pm0.3}$} & \textbf{49.9$_{\pm0.2}$} & \textbf{41.2$_{\pm0.5}$} & \textbf{48.4$_{\pm0.4}$} & \textbf{50.9$_{\pm0.4}$} \\
CentEmb per-client & \textbf{34.7$_{\pm0.6}$} & \textbf{46.5$_{\pm0.4}$} & \textbf{49.6$_{\pm0.1}$} & 35.9$_{\pm0.8}$ & 46.0$_{\pm0.5}$ & 49.3$_{\pm0.4}$ & 30.6$_{\pm0.2}$ & 45.7$_{\pm0.6}$ & 49.6$_{\pm0.2}$ \\
\cdashline{1-10}
\multicolumn{10}{l}{\textit{TypiClust}} \\
Centralized & 28.7$_{\pm0.8}$ & 41.4$_{\pm0.5}$ & 46.0$_{\pm0.4}$ & \textbf{36.8$_{\pm0.7}$} & \textbf{45.0$_{\pm0.5}$} & \textbf{48.6$_{\pm0.4}$} & \textbf{37.5$_{\pm0.6}$} & \textbf{46.5$_{\pm0.3}$} & \textbf{50.1$_{\pm0.2}$} \\
CentEmb per-client & \textbf{32.8$_{\pm0.6}$} & \textbf{43.8$_{\pm0.5}$} & \textbf{47.6$_{\pm0.1}$} & 34.0$_{\pm1.0}$ & \textbf{45.0$_{\pm0.5}$} & 47.8$_{\pm0.4}$ & 31.9$_{\pm0.4}$ & 45.2$_{\pm0.6}$ & 48.7$_{\pm0.1}$ \\
\cdashline{1-10}
\multicolumn{10}{l}{\textit{MaxHerding}} \\
Centralized & 30.2$_{\pm1.5}$ & 43.1$_{\pm0.9}$ & 47.3$_{\pm0.2}$ & \textbf{35.8$_{\pm0.5}$} & \textbf{46.2$_{\pm0.3}$} & \textbf{49.2$_{\pm0.3}$} & \textbf{36.2$_{\pm0.4}$} & \textbf{46.6$_{\pm0.5}$} & \textbf{50.2$_{\pm0.3}$} \\
CentEmb per-client & \textbf{33.6$_{\pm0.8}$} & \textbf{45.1$_{\pm0.2}$} & \textbf{49.0$_{\pm0.4}$} & 30.5$_{\pm0.5}$ & 45.7$_{\pm0.4}$ & 48.6$_{\pm0.4}$ & 27.8$_{\pm0.0}$ & 44.7$_{\pm0.1}$ & 49.1$_{\pm0.3}$ \\
\bottomrule
\end{tabular}
\caption{Same-embedding selection-stage ablation (Figure~\ref{fig:same_embedding_noniid_gap}, non-IID, CIFAR-100): test accuracy (\%) at 1, 5, and 10 labeled samples per class, comparing centralized selection to per-client selection on the identical shared embedding (CentEmb per-client), across Dirichlet heterogeneity $\alpha \in \{0.1, 1, 10\}$. Mean $\pm$ standard error over 3 seeds. Bold marks the better of the pair per column, within each method block.}
\label{tab:ablation_noniid_alpha}
\end{table*}

\begin{table*}[t]
\centering
\small
\begin{tabular}{lccccccccc}
\toprule
& \multicolumn{3}{c}{TypiClust} & \multicolumn{3}{c}{ProbCover} & \multicolumn{3}{c}{MaxHerding} \\
\cmidrule(lr){2-4} \cmidrule(lr){5-7} \cmidrule(lr){8-10}
$\varepsilon$ & 100 & 500 & 1000 & 100 & 500 & 1000 & 100 & 500 & 1000 \\
\midrule
0.0 & 37.7$_{\pm0.5}$ & 46.3$_{\pm0.1}$ & 50.1$_{\pm0.1}$ & 41.3$_{\pm0.1}$ & 48.6$_{\pm0.6}$ & 51.2$_{\pm0.6}$ & 36.2$_{\pm0.4}$ & 46.9$_{\pm0.2}$ & 50.2$_{\pm0.1}$ \\
0.1 & 37.5$_{\pm0.1}$ & 46.7$_{\pm0.2}$ & 50.1$_{\pm0.1}$ & 41.7$_{\pm0.2}$ & 48.4$_{\pm0.4}$ & 50.8$_{\pm0.3}$ & 35.9$_{\pm0.1}$ & 46.9$_{\pm0.2}$ & 50.3$_{\pm0.4}$ \\
0.3 & 37.2$_{\pm0.2}$ & 47.0$_{\pm0.3}$ & 50.3$_{\pm0.1}$ & 39.5$_{\pm0.5}$ & 47.1$_{\pm0.4}$ & 50.6$_{\pm0.3}$ & 36.7$_{\pm0.5}$ & 47.2$_{\pm0.7}$ & 50.3$_{\pm0.1}$ \\
0.6 & 37.7$_{\pm0.4}$ & 46.2$_{\pm0.1}$ & 50.1$_{\pm0.2}$ & 23.0$_{\pm1.3}$ & 41.3$_{\pm0.6}$ & 47.5$_{\pm0.3}$ & 36.6$_{\pm0.4}$ & 46.8$_{\pm0.1}$ & 49.8$_{\pm0.2}$ \\
1.0 & 36.3$_{\pm0.5}$ & 45.4$_{\pm0.5}$ & 49.4$_{\pm0.4}$ & 24.2$_{\pm0.7}$ & 41.3$_{\pm0.5}$ & 47.1$_{\pm0.1}$ & 32.4$_{\pm0.6}$ & 44.1$_{\pm0.7}$ & 48.1$_{\pm0.3}$ \\
\cdashline{1-10}
CentEmb per-client & 30.9$_{\pm0.4}$ & 44.6$_{\pm0.2}$ & 47.9$_{\pm0.2}$ & 27.4$_{\pm0.4}$ & 45.2$_{\pm0.2}$ & 49.8$_{\pm0.4}$ & 25.0$_{\pm0.6}$ & 44.8$_{\pm0.3}$ & 48.7$_{\pm0.4}$ \\
\bottomrule
\end{tabular}
\caption{Test accuracy (\%) at 1, 5, and 10 labeled samples per class on CIFAR-100, for centralized active selection under increasing embedding noise $\varepsilon$ (rows), for each method (column groups). Mean $\pm$ standard error over 3 seeds. The bottom row is the noiseless aligned-embedding per-client baseline (CentEmb per-client).}
\label{tab:eps_sweep_absolute}
\end{table*}

\section{Embedding Perturbation}
\label{app:noise}
Let $\mathbf{X} \in \mathbb{R}^{N \times d}$ denote the matrix of embeddings, where each row $\mathbf{x}_i$ is normalized such that $\|\mathbf{x}_i\|_2 = 1$.  
We apply a stochastic perturbation that preserves unit norm while controlling the expected $\ell_2$ displacement.

For each embedding $\mathbf{x}_i$, we sample a Gaussian vector
\[
\mathbf{g} \sim \mathcal{N}(\mathbf{0}, I_d),
\]
and project it onto the tangent space of the unit sphere at $\mathbf{x}_i$:
\[
\mathbf{g}_\perp = \mathbf{g} - (\mathbf{g}^\top \mathbf{x}_i)\mathbf{x}_i,
\]
which ensures $\mathbf{g}_\perp^\top \mathbf{x}_i = 0$.  
We then scale the perturbation as
\[
\boldsymbol{\xi}_i = \frac{\sigma}{\sqrt{d-1}} \mathbf{g}_\perp,
\]
so that
\[
\mathbb{E}\|\boldsymbol{\xi}_i\|_2^2 = \sigma^2.
\]

The perturbed embedding is defined by
\[
\mathbf{x}_i' = \frac{\mathbf{x}_i + \boldsymbol{\xi}_i}{\|\mathbf{x}_i + \boldsymbol{\xi}_i\|_2},
\]
which guarantees $\|\mathbf{x}_i'\|_2 = 1$.

\paragraph{Choice of $\sigma$.}
The parameter $\sigma$ is chosen such that the expected displacement after normalization matches a target value $\epsilon$.  
Since $\boldsymbol{\xi}_i \perp \mathbf{x}_i$, we have
\[
\|\mathbf{x}_i + \boldsymbol{\xi}_i\|_2 = \sqrt{1 + \|\boldsymbol{\xi}_i\|_2^2}.
\]

In high dimensions, $\|\boldsymbol{\xi}_i\|_2^2$ concentrates sharply around its expectation $\sigma^2$, yielding the approximation
\[
\mathbf{x}_i^\top \mathbf{x}_i' 
\approx 
\frac{1}{\sqrt{1 + \sigma^2}}.
\]

The squared displacement is therefore
\[
\|\mathbf{x}_i' - \mathbf{x}_i\|_2^2 
= 2 - 2\, \mathbf{x}_i^\top \mathbf{x}_i'
\approx
2 - \frac{2}{\sqrt{1 + \sigma^2}}.
\]

Matching the expected displacement to $\epsilon^2$ gives
\[
\epsilon \approx \sqrt{2 - \frac{2}{\sqrt{1 + \sigma^2}}},
\]
which yields
\[
\sigma = \sqrt{\frac{1}{(1 - \epsilon^2/2)^2} - 1}.
\]

For $\epsilon < \sqrt{2}$, this mapping is bijective in $\sigma$, ensuring stable calibration.

\paragraph{Properties.}
This construction has three key properties:
(i) it preserves unit norm exactly,  
(ii) it induces an isotropic perturbation in the tangent space, and  
(iii) it provides explicit control over the expected displacement via $\epsilon$, with strong concentration in high dimension.

\subsection*{Empirical Evaluation}

We evaluate embedding obfuscation by adding noise before clients transmit embeddings to the server. Our goal is to determine how much perturbation can be introduced while preserving the gains of centralized selection. We compare noisy centralized selection against independent per-client selection in the aligned embedding space, following the protocol in Appendix~\ref{sec:ablation}. Results are shown in 
Table~\ref{tab:eps_sweep_absolute}, which reports the absolute accuracies underlying Figure~\ref{fig:noise_auc_vs_eps}.

\section{Client Split Generation Methodology}
\label{app:split}

We consider both IID and non-IID client partitions. In the IID setting, we randomly split the balanced dataset across clients, so that each client receives approximately the same number of samples and the label distribution is preserved across clients.

In the non-IID setting, each client is assigned data with a distinct label distribution, generated using a Dirichlet-based partitioning scheme. Let $C$ denote the number of classes and $K$ the number of clients. For each client $k \in \{1, \dots, K\}$, we sample a class-probability vector
\[
p_k \sim \mathrm{Dir}(\alpha \mathbf{1}),
\]
where $\mathbf{1} \in \mathbb{R}^C$ is the all-ones vector and $\alpha > 0$ is a concentration parameter.

Given these sampled distributions, the dataset is partitioned class-wise. For each class $c$, we collect all samples belonging to that class and distribute them among clients according to the probabilities $\{p_k[c]\}_{k=1}^K$. This assignment is performed while enforcing that each client receives approximately the same total number of samples, thereby preserving balanced dataset sizes across clients while inducing heterogeneous label distributions.

The parameter $\alpha$ controls the degree of heterogeneity: for $\alpha \ll 1$, the resulting distributions are highly skewed, leading to strongly non-IID client data; for $\alpha \approx 1$, the distributions are moderately heterogeneous; and for $\alpha \gg 1$, the class proportions concentrate around uniformity, yielding approximately IID client distributions.

\section{Derivation of the Redundancy Gap}
\label{app:coverage-gap-proofs}

This appendix derives Proposition~\ref{prop:heterogeneity-gap} and Corollaries~\ref{cor:disjoint-client} and~\ref{cor:iid-gap}, and relates the redundancy gap to the coordination gap.

\subsection{Expected Coverage}

Let
\[
S_{\mathrm{loc}}
=
\bigcup_{k=1}^K S_k^\star, \qquad L_K
=
\sum_{k=1}^K
\mathbb{E}[\Phi(S_k^\star)].
\]
For each cell $C_m$, define
\[
I_m
=
\mathbf{1}\{S_{\mathrm{loc}}\cap C_m\neq\emptyset\}.
\]
Then
\[
\Phi(S_{\mathrm{loc}})
=
\sum_{m=1}^M I_m.
\]
By linearity of expectation,
\begin{equation}
\label{eq:app-coverage-start}
\mathbb{E}[\Phi(S_{\mathrm{loc}})]
=
\sum_{m=1}^M
\Pr(S_{\mathrm{loc}}\cap C_m\neq\emptyset).
\end{equation}

A cell is not covered by $S_{\mathrm{loc}}$ if none of the clients covers it. Conditional independence gives
\[
\begin{aligned}
\Pr(S_{\mathrm{loc}}\cap C_m=\emptyset)
&=
\prod_{k=1}^K
\Pr(S_k^\star\cap C_m=\emptyset)\\
&=
\prod_{k=1}^K(1-a_{k,m}).
\end{aligned}
\]
Therefore,
\begin{equation}
\label{eq:app-expected-coverage}
\mathbb{E}[\Phi(S_{\mathrm{loc}})]
=
\sum_{m=1}^M
\left(
1-\prod_{k=1}^K(1-a_{k,m})
\right).
\end{equation}

\subsection{Local Coverage and Redundancy}

For each client,
\[
\Phi(S_k^\star)
=
\sum_{m=1}^M
\mathbf{1}\{S_k^\star\cap C_m\neq\emptyset\}.
\]
Hence,
\[
\begin{aligned}
\mathbb{E}[\Phi(S_k^\star)]
&=
\sum_{m=1}^M
\mathbb{E}
\left[
\mathbf{1}\{S_k^\star\cap C_m\neq\emptyset\}
\right]\\
&=
\sum_{m=1}^M
\Pr(S_k^\star\cap C_m\neq\emptyset)\\
&=
\sum_{m=1}^M a_{k,m}.
\end{aligned}
\]
Consequently,
\begin{equation}
\label{eq:app-LK}
L_K
=
\sum_{k=1}^K\sum_{m=1}^M a_{k,m}.
\end{equation}

The quantity $L_K$ counts coverage with multiplicity: a cell covered by multiple clients contributes once for each client. Define the expected cross-client redundancy as
\begin{equation}
\label{eq:app-redundancy-definition}
\bar{\Delta}_K^{\mathrm{red}}
=
L_K
-
\mathbb{E}[\Phi(S_{\mathrm{loc}})].
\end{equation}
Using Eqs.~\eqref{eq:app-expected-coverage} and \eqref{eq:app-LK},
\begin{equation}
\label{eq:app-redundancy}
\bar{\Delta}_K^{\mathrm{red}}
=
\sum_{m=1}^M
\left[
\sum_{k=1}^K a_{k,m}
-
\left(
1-\prod_{k=1}^K(1-a_{k,m})
\right)
\right].
\end{equation}

The expected coordination gap admits the exact decomposition
\begin{equation}
\label{eq:app-gap-decomposition}
\bar{\Delta}_K
=
\left(
\mathbb{E}[\Phi_B^\star]-L_K
\right)
+
\bar{\Delta}_K^{\mathrm{red}}.
\end{equation}
The first term measures the loss due to ineffective local coverage, while $\bar{\Delta}_K^{\mathrm{red}}$ measures redundancy across clients.

In the low-budget regime, a locally effective diversity-based selector is expected to cover approximately one new local cell with each query. If client $k$ has budget $b_k$, then
\[
\mathbb{E}[\Phi(S_k^\star)]
\approx
b_k.
\]
Since
\[
B
=
\sum_{k=1}^K b_k,
\]
this gives
\[
L_K
\approx
B.
\]
If the centralized selector also covers approximately one new cell per query, then
\[
\mathbb{E}[\Phi_B^\star]
\approx
B,
\]
and therefore
\[
L_K
\approx
\mathbb{E}[\Phi_B^\star].
\]

Under the idealized equality
\[
L_K
=
\mathbb{E}[\Phi_B^\star],
\]
Eq.~\eqref{eq:app-gap-decomposition} becomes
\begin{equation}
\label{eq:app-gap-redundancy}
\bar{\Delta}_K
=
\bar{\Delta}_K^{\mathrm{red}}
=
\sum_{m=1}^M
\left[
\sum_{k=1}^K a_{k,m}
-
\left(
1-\prod_{k=1}^K(1-a_{k,m})
\right)
\right].
\end{equation}

\subsection{Disjoint Accessible Cells}
\label{app:propo1-proof}

We first show that pairwise-disjoint accessible cell sets yield zero redundancy.

\begin{proof}[Proof of Proposition~\ref{prop:heterogeneity-gap}, Part 1]
Assume
\[
A_i\cap A_j=\emptyset
\qquad
\forall i\neq j.
\]
For every cell $C_m$, at most one client has $a_{k,m}>0$.

If all $a_{k,m}$ are zero, then
\[
1-\prod_{k=1}^K(1-a_{k,m})
=
0
=
\sum_{k=1}^K a_{k,m}.
\]

Otherwise, let $j$ be the unique client satisfying $a_{j,m}>0$. Then
\[
a_{k,m}=0
\qquad
\forall k\neq j,
\]
and
\[
\begin{aligned}
1-\prod_{k=1}^K(1-a_{k,m})
&=
1-(1-a_{j,m})
\prod_{k\neq j}(1-a_{k,m})\\
&=
1-(1-a_{j,m})\\
&=
a_{j,m}\\
&=
\sum_{k=1}^K a_{k,m}.
\end{aligned}
\]
Thus, every cell contributes zero to Eq.~\eqref{eq:app-redundancy}, implying
\[
\bar{\Delta}_K^{\mathrm{red}}=0.
\]
\end{proof}

\subsection{Shared Selection Mass}
We next show that increasing a client's selection probability for a cell can only increase that cell's redundancy contribution.

For a fixed cell $C_m$, define
\[
\Delta_m
=
\sum_{k=1}^K a_{k,m}
-
\left(
1-\prod_{k=1}^K(1-a_{k,m})
\right).
\]

\begin{proof}[Proof of Proposition~\ref{prop:heterogeneity-gap}, Part 2]
Differentiating with respect to $a_{j,m}$ gives
\[
\begin{aligned}
\frac{\partial\Delta_m}{\partial a_{j,m}}
&=
1-
\frac{\partial}{\partial a_{j,m}}
\left(
1-\prod_{k=1}^K(1-a_{k,m})
\right)\\
&=
1-\prod_{k\neq j}(1-a_{k,m}).
\end{aligned}
\]
Since $a_{k,m}\in[0,1]$,
\[
0
\leq
\prod_{k\neq j}(1-a_{k,m})
\leq
1,
\]
and therefore
\[
\frac{\partial\Delta_m}{\partial a_{j,m}}
\geq
0.
\]

Equality holds if and only if
\[
a_{k,m}=0
\qquad
\forall k\neq j.
\]
Thus, the derivative is strictly positive exactly when another client selects from $C_m$ with positive probability.
\end{proof}

\subsection{Marginal Effect of an Additional Client}
Finally, we characterize the additional redundancy introduced by a new client. Define
\[
c_m^{(K)}
=
1-\prod_{k=1}^K(1-a_{k,m}),
\]
the probability that at least one of the first $K$ clients selects from $C_m$.

\begin{proof}[Proof of Proposition~\ref{prop:heterogeneity-gap}, Part 3]
From Eq.~\eqref{eq:app-redundancy},
\[
\bar{\Delta}_K^{\mathrm{red}}
=
\sum_{m=1}^M
\left[
\sum_{k=1}^K a_{k,m}
-
\left(
1-\prod_{k=1}^K(1-a_{k,m})
\right)
\right].
\]
Therefore,
\[
\begin{aligned}
\bar{\Delta}_{K+1}^{\mathrm{red}}-\bar{\Delta}_K^{\mathrm{red}}
&=
\sum_{m=1}^M
\Bigl[
a_{K+1,m}\\
&\qquad
+\prod_{k=1}^{K+1}(1-a_{k,m})
-\prod_{k=1}^{K}(1-a_{k,m})
\Bigr].
\end{aligned}
\]

For each cell,
\[
\prod_{k=1}^{K+1}(1-a_{k,m})
=
(1-a_{K+1,m})
\prod_{k=1}^{K}(1-a_{k,m}).
\]
Hence,
\[
\begin{aligned}
&
\left(
1-\prod_{k=1}^{K+1}(1-a_{k,m})
\right)
-
\left(
1-\prod_{k=1}^{K}(1-a_{k,m})
\right)\\
&=
a_{K+1,m}
\prod_{k=1}^{K}(1-a_{k,m}).
\end{aligned}
\]
Substituting this identity yields
\[
\begin{aligned}
\bar{\Delta}_{K+1}^{\mathrm{red}}-\bar{\Delta}_K^{\mathrm{red}}
&=
\sum_{m=1}^M
a_{K+1,m}
\left[
1-\prod_{k=1}^{K}(1-a_{k,m})
\right]\\
&=
\sum_{m=1}^M
a_{K+1,m}c_m^{(K)}.
\end{aligned}
\]
\end{proof}

We next show that adding a client whose accessible cells are disjoint from those of the existing clients introduces no additional redundancy.

\begin{proof}[Proof of Corollary~\ref{cor:disjoint-client}]
If
\[
A_{K+1}\cap\bigcup_{k=1}^K A_k=\emptyset,
\]
then
\[
a_{K+1,m}>0
\quad\Longrightarrow\quad
c_m^{(K)}=0.
\]
Therefore,
\[
\bar{\Delta}_{K+1}^{\mathrm{red}}
-
\bar{\Delta}_K^{\mathrm{red}}
=
\sum_{m=1}^M a_{K+1,m}c_m^{(K)}
=
0.
\]
\end{proof}

\subsection{IID Clients}
We finally show that, under identical cell-level selection probabilities, the redundancy gap is nondecreasing in the number of clients.

\begin{proof}[Proof of Corollary~\ref{cor:iid-gap}]
Under the IID assumption,
\[
a_{k,m}
=
a_m
\]
for every client $k$. Substituting into Eq.~\eqref{eq:app-redundancy} gives
\[
\bar{\Delta}_K^{\mathrm{red}}
=
\sum_{m=1}^M
\left[
K a_m
-
\left(
1-(1-a_m)^K
\right)
\right].
\]

For a fixed cell, define
\[
f_m(K)
=
K a_m
-
\left(
1-(1-a_m)^K
\right).
\]
Its finite difference is
\[
\begin{aligned}
f_m(K+1)-f_m(K)
&=
a_m-a_m(1-a_m)^K\\
&=
a_m
\left[
1-(1-a_m)^K
\right]\\
&\geq
0.
\end{aligned}
\]
Summing over cells gives
\[
\bar{\Delta}_{K+1}^{\mathrm{red}}
-
\bar{\Delta}_K^{\mathrm{red}}
=
\sum_{m=1}^M
a_m
\left[
1-(1-a_m)^K
\right]
\geq
0.
\]
Thus, under fixed per-client selection behavior, the expected redundancy gap is nondecreasing in the number of IID clients.
\end{proof}

\section{Hyperparameters}
\label{app:hyperparameters}

Table~\ref{tab:selection_hyperparameters} lists the selection hyperparameters used in our experiments. ProbCover radii follow the dataset-specific values recommended by \citet{yehuda2022active}; MaxHerding uses the fixed Gaussian-kernel lengthscale $\sigma=1$ from \citet{bae2024maxherding}.

\begin{table}[h]
\small
\centering
\begin{tabular}{@{}lll@{}}
\toprule
\textbf{Dataset} &
\shortstack{\textbf{ProbCover}\\\textbf{radius $\delta$}} &
\shortstack{\textbf{MaxHerding}\\\textbf{lengthscale $\sigma$}} \\ \midrule
\textbf{CIFAR-10} & $0.75$ & $1.0$ \\
\textbf{CIFAR-100} & $0.65$ & $1.0$ \\
\textbf{ImageNet-100} & $0.55$ & $1.0$ \\
\bottomrule
\end{tabular}
\caption{Selection hyperparameters used in our experiments.}
\label{tab:selection_hyperparameters}
\end{table}

\end{document}